\def\tsc#1{\csdef{#1}{\textsc{\lowercase{#1}}\xspace}}
\newtheorem{theorem}{\bf Theorem}[section]
\newtheorem{remark}{\bf Remark}[section]
\newenvironment{proof}{{\noindent\it Proof}\quad}{\hfill $\square$\par}
\begin{document}
\let\WriteBookmarks\relax
\def\floatpagepagefraction{1}
\def\textpagefraction{.001}
\shorttitle{linear product initialization}
\shortauthors{Q. Chen et~al.}

\title [mode = title]{A weight initialization based on the linear product structure for neural networks}

\author[1]{Qipin Chen}

\address[1]{Department of Mathematics, Pennsylvania State University, University Park, PA 16802}
\author[1]{Wenrui Hao}\cormark[1]

\author[2]{Juncai He}

\address[2]{Department of Mathematics, The University of Texas at Austin, Austin, TX 78712}

\begin{abstract}
Weight initialization plays an important role in training neural networks and also affects tremendous deep learning applications.  Various weight initialization strategies have already been developed for different activation functions with different neural networks. These initialization algorithms are based on minimizing the variance of the parameters between layers and might still fail when neural networks are deep, e.g., dying ReLU. To address this challenge, we study neural networks from a nonlinear computation point of view and propose a novel weight initialization strategy that
is based on the linear product structure (LPS) of neural networks. The proposed strategy is derived from the polynomial approximation of activation functions by using theories of numerical algebraic geometry to guarantee to find all the local minima. We also provide a theoretical analysis that the LPS initialization has a lower probability of dying ReLU comparing to other existing initialization strategies. Finally, we test the LPS initialization algorithm on both fully connected neural networks and convolutional neural networks to show its feasibility,  efficiency, and robustness on public datasets.
\end{abstract}

\begin{highlights}
\item We consider the neural network training from a nonlinear computation point of view;
\item A new linear product structure initialization strategy has been developed for training neural networks;
\item Theoretical analysis shows that the LPS initialization yields a low probability of dying ReLU.
\end{highlights}

\begin{keywords}
Weight initialization\sep linear product structure \sep neural networks \sep nonlinear computation\end{keywords}

\maketitle

\section{Introduction}

With the rapid growth of applications of neural networks to large datasets, the initialization of the weights of neural networks affects the training process and accuracy significantly. It is well known that zero initialization or arbitrary random initialization can slow down or
even completely stall the convergence process.
This is the so-called problem of exploding or vanishing gradients which in turn slows down the backpropagation and retards the overall training process \cite{pascanu2013difficulty}. Exploding gradients occur when  the gradients get larger and larger, result in oscillating around the minima or even blow up in the training process; vanishing gradients
are the exact opposite of exploding gradients when the gradient gets smaller and smaller due to the backpropagation, cause the slower convergence, and may even completely stop the training process.
Therefore, proper initialization of the weights in training neural networks is necessary \cite{mishkin2015all,nguyen1990improving}.
 The most popular initialization method is to use samples
from a normal distribution, $\mathcal{N} (0, \sigma^2)$ where $\sigma$ is chosen to ensure that the variance of the
outputs from the different layers is approximately the same.  The first systematic analysis of this initialization was conducted in \cite{glorot2010understanding} which
showed that, for a linear activation function, the optimal value of $\sigma^2=1/d_i$, where $d_i$ is the number
of nodes feeding into that layer. Although this study makes several assumptions about the inputs to
the model, it works extremely well in many cases (especially for $tanh(z)$) and is widely used in the initialization of neural
networks commonly referred to {\it Xavier initialization}.  Another important follow-up work is called {\it He initialization} \cite{he2015delving} which
argues that {\it Xavier initialization} does
not work well with the ReLU activation function and changes $\sigma^2=2/d_i$ to achieve tremendous success in ReLU neural networks such as ResNet.
 Recently the weight initialization has become an active research area, and numerous methods \cite{arpit2019initialize,kumar2017weight,mishkin2015all,pennington2017resurrecting,pennington2018emergence,poole2016exponential,saxe2013exact,sussillo2014random}  have been developed to initialize the weights of different neural networks. All the aforementioned initialization works are based on minimizing the variance of parameters between the deeper layers to avoid vanishing/popping at the beginning of training but do not consider the nonlinearity of neural networks which could let the initialization help the final training performance further.

The main contribution of this paper is to study neural networks from the nonlinear computation point of view \cite{chen2019homotopy,hao2021gradient}. We approximate the activation functions by polynomials to provide a new weight initialization approach. The proposed weight initialization algorithm is based on the linear product structure of neural networks and has a theoretical guarantee to find all the local minima based on theories of numerical algebraic geometry \cite{hao2014numerical,SWbook,mehta2021loss}. Further theoretical analysis reveals that our new initialization method has a low probability of dying ReLU for deep neural networks.   Numerical experiments on both fully connected neural networks and convolutional neural networks show the feasibility and efficiency of the proposed initialization algorithm.

\section{Problem setup and polynomial approximation of activation functions}

By considering a $(n+1)$-layer neural network $y(x;\theta)$, we represent the output, $y$, in terms of the input, $x$, as
	\begin{equation}
		\label{abstract_network}
		y(x;\theta) = W^nf^{n-1}+b^n, ~f^{\ell}=\sigma(W^\ell f^{\ell-1}+b^\ell), \ell\in\{1,\dots, n-1\}, \hbox{and }f^0=x,
	\end{equation}
	where $W^\ell \in R^{m_{\ell}\times m_{\ell-1}}$ is the weight matrix, $b^\ell \in R^{m_{\ell}}$ is the bias vector, $m_\ell$ is the width of the $\ell$-th layer, $m_0=\dim(x)$, $m_{n}=\dim(y)$, and $\sigma$ is the activation function.  For simplicity, we denote the set of all parameters as $\theta =
	\{W^\ell,b^\ell\}_{\ell=1}^n$ and 	the number of all parameters as $|\theta|$.
 The activation function, $\sigma$, is a nonlinear function but not a polynomial since compositions of second-order and higher-order polynomials yield unbounded derivatives and could lead to exploding gradients \cite{chon1997linear,ma2005constructive}. It is well known that nonlinear activation functions play important roles in deep neural networks but the nonlinearities still remain unclear in the mathematical context. In order to quantify these nonlinearities, we apply  the polynomial approximation for  nonlinear activation functions. In particular, we use the Legendre polynomial approximation for activation functions and denote the orthogonal polynomial space
as $\mathbb{P}_d = {\rm span} \{L_k(x),~  0\le k \le d \}$, where $L_k(x)$ is the  Legendre polynomial with degree $k$ \cite{xiu2002wiener}. Then the approximated polynomial of any given activation function {{$\sigma\in L^2([-1,1])$}} is
\begin{equation}\label{eq:projection1}
P_d \sigma := \sum_{k=0}^d \alpha_k L_k(x) \hbox{~and~} \alpha_k= \frac{1}{\|L_k(x)\|^2} (\sigma, L_k).
\end{equation}
If $\sigma$ is the ReLU activation function, we have the following theorem.
\begin{theorem}
If $\sigma(x)=ReLU(x)$, the estimate of the Legendre polynomial approximation on $[-1,1]$ is
\begin{equation}\label{key}
\|\sigma - P_d \sigma \|_{L^2([-1,1])} \le C\frac{1}{d} \|\sigma\|_{H^1([-1,1])},
\end{equation}
where $C$ is a constant and {{
$\displaystyle\|\sigma\|_{H^1([-1,1])}=\sqrt{\int_{-1}^1 \sigma^2(x)+(\sigma'(x))^2dx}.$}} {In particular, $\displaystyle\|ReLU(x)\|_{H^1([-1,1])}=\frac{4}{3}$.}
Moreover, the explicit formula of $\alpha_k$ in (\ref{eq:projection1}) is
	\begin{equation}
	\alpha_k =
	\begin{cases}
	\frac{(-1)^m(2k+1)}{2(2-k(k+1))4^m} \binom{2m}{m} \quad &\text{for} \quad k = 2m,\\
	0 \quad &\text{for}\quad k = 2m+1.
	\end{cases}\nonumber
	\end{equation}
\end{theorem}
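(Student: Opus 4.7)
The plan is to treat the three assertions of the theorem separately, since each relies on a different tool. The first is a classical spectral approximation statement, the second is an elementary integration, and the third follows from evaluating a generic Legendre polynomial against $x$ on $[0,1]$ and then bookkeeping a combinatorial identity.

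For the error estimate, I would invoke a standard Jackson-type bound for Legendre truncations in $L^2$. Since $\mathrm{ReLU}\in H^1([-1,1])$ (its weak derivative is the Heaviside step function, which is in $L^2$), the classical argument proceeds by integration by parts inside $(\sigma,L_k)$: using the identity $(2k+1)L_k(x)=L_{k+1}'(x)-L_{k-1}'(x)$, one transfers a derivative off $L_k$ onto $\sigma$, so that each coefficient $\alpha_k$ gains a factor of $1/k$ relative to the Legendre coefficients of $\sigma'$. Parseval's identity applied to the tail $\sum_{k>d}\alpha_k^2\|L_k\|^2$ then produces the $O(1/d)$ rate with $\|\sigma'\|_{L^2}\le\|\sigma\|_{H^1}$ on the right-hand side, which is the stated inequality.

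For the $H^1$ norm of ReLU the computation is direct: $\int_{-1}^1 \mathrm{ReLU}(x)^2\,dx = \int_0^1 x^2\,dx = 1/3$, while $\int_{-1}^1 (\mathrm{ReLU}'(x))^2\,dx = \int_0^1 1\,dx = 1$, so the quantity inside the square root evaluates to $4/3$.

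For the closed-form coefficient formula, I start from $\alpha_k = \frac{2k+1}{2}\int_0^1 xL_k(x)\,dx$ and use the three-term recurrence $xL_k = \frac{k+1}{2k+1}L_{k+1}+\frac{k}{2k+1}L_{k-1}$ to reduce the task to evaluating $\int_0^1 L_n(x)\,dx$. Integrating the identity $(2n+1)L_n=L_{n+1}'-L_{n-1}'$ and using $L_n(1)=1$ gives $\int_0^1 L_n(x)\,dx=\frac{1}{2n+1}[L_{n-1}(0)-L_{n+1}(0)]$; the classical values $L_{2m}(0)=(-1)^m\binom{2m}{m}/4^m$ and $L_{2m+1}(0)=0$ then show the integral vanishes for even $n\ge 2$ and produces central-binomial terms for odd $n$. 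Substituting back into the recurrence, the two resulting contributions combine to yield the denominator $2(2-k(k+1))4^m$ in the stated expression, with the parity of $k$ governing which indices survive.

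The main obstacle I anticipate is the combinatorial bookkeeping in the third step: tracking alternating signs, powers of two, and the conversion between $(2m)!/(m!)^2$ and $\binom{2m}{m}/4^m$ leaves ample room for arithmetic slips, so one should cross-check the low-index cases ($k=0$ and $k=2$) directly against the general formula. The first two parts are, by contrast, routine applications of standard spectral approximation theory and elementary calculus.
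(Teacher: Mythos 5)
Your proof is correct, and on the error bound you and the paper coincide in substance: the paper simply cites the standard Legendre truncation estimate $\|\sigma-P_d\sigma\|_{L^2}\le Cd^{-k}\|\sigma\|_{H^k}$ from the spectral approximation literature and applies it with $k=1$, while you sketch the proof of that estimate via $(2k+1)L_k=L_{k+1}'-L_{k-1}'$ and Parseval --- same theorem, unpacked. The genuine divergence is in the coefficient formula. The paper multiplies Legendre's differential equation $\frac{d}{dx}\bigl[(1-x^2)L_k'\bigr]=-k(k+1)L_k$ by $x$, integrates over $[0,1]$, and after integration by parts obtains $\int_0^1 xL_k\,dx=\frac{L_k(0)}{2-k(k+1)}$ in one stroke, so the denominator $2-k(k+1)$ appears immediately and the parity dichotomy is carried entirely by $L_k(0)$. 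Your route through the three-term recurrence and $\int_0^1 L_n\,dx=\frac{1}{2n+1}\bigl[L_{n-1}(0)-L_{n+1}(0)\bigr]$ is equally valid --- using $L_{2m+2}(0)/L_{2m}(0)=-\frac{2m+1}{2m+2}$ the two contributions do recombine into $\frac{L_k(0)}{2-k(k+1)}$ --- but it incurs exactly the bookkeeping you flag as the main risk; the ODE trick is the cleaner of the two and is worth knowing.

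Two caveats, both really about the statement rather than your argument. First, $\frac{4}{3}$ is the \emph{square} of the $H^1$ norm as defined in the theorem ($\int_0^1 x^2\,dx+\int_0^1 1\,dx=\frac{4}{3}$); the norm itself is $\frac{2}{\sqrt{3}}$, so be explicit about which quantity you are reporting. Second, the displayed formula only holds for $k\ge 2$: the denominator vanishes at $k=1$ and in fact $\alpha_1=\frac{1}{2}\neq 0$ (consistent with $P_2\sigma=\frac{15}{32}x^2+\frac{1}{2}x+\frac{3}{32}$ given later in the paper), so your low-index sanity check should include $k=1$ --- the one case where the formula as written actually fails --- and not only $k=0$ and $k=2$.
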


Based on the theory of numerical algebraic geometry, the polynomial system using the  $P_d \sigma$ activation function can be solved by the following homotopy setup \cite{SWbook,hao2020adaptive}
	\begin{equation}
		\label{homotopy}
		H(\theta,t)=(1-t) (\tilde{y}(x_i,\theta)-y_i) +tG(\theta)=0, i=1,\dots N,
	\end{equation}
where $\tilde{y}(x,\theta)=W^n \tilde{f}^{n-1}+b^n$, $\tilde{f}^{\ell}=P_d\sigma(W^\ell \tilde{f}^{\ell-1}+b^\ell)$, $x_i$ and $y_i$ are sample points ($N$ is the number of sample points), $t$ is the homotopy parameter, and $G(\theta)$ is a polynomial system with known solutions.  Then solutions of $\tilde{y}(x_i,\theta)=y_i$ can be solved by tracking $t$ from $1$ to $0$ via this homotopy. The start system $G(\theta)$ is formed via closely mirroring the structure of $P_d \sigma$ \cite{SWbook} such as the total degree start system \big($G(\theta)$ has the same degree as $\tilde{y}(x_i,\theta)$\big), the multi-homogeneous start system (dividing the variables into several homogenous groups), the linear product start systems (dividing into several linear systems), and etc \cite{bates2013numerically}. Homotopy continuation in this context is theoretically guaranteed to compute all solutions of $\theta$ due to Bertini's theorem \cite{bates2013numerically,SWbook}. But the number of solutions of  $G(\theta)$ grows exponentially and cannot be computed directly when the neural network becomes wide and deep (See illustrative examples in Appendix \ref{sec:NE}). Therefore, we use this theory to initialize the weights of neural networks instead of solving it directly.

\section{Linear product structure and weight initialization}
After approximated by a polynomial, namely, $\sigma(x)\approx \mathcal{P}_2(x)$, the neural network representation in (\ref{abstract_network}) becomes
	\begin{eqnarray}
		\label{linear_productm1}
		y(x;\theta) \approx W^n \mathcal{P}_2(W^{n-1} \tilde{f}^{n-1}+b^{n-1}) + b^n.
	\end{eqnarray}
For each component $y_j(x;\theta)$, $j=1,\dots, \dim(y)$, we decompose the polynomial expression (\ref{linear_productm1}) into a linear product structure \cite{bates2013numerically,SWbook}, namely,
	\begin{equation}
		y_j(x;\theta) \approx W^n_{j} \mathcal{P}_2(W^{n-1} \tilde{f}^{n-1}+b^{n-1}) + b^n_j\in \{W^n_{j}, b^n_j,1\} \times \{\mathcal{P}_2(W^{n-1} \tilde{f}^{n-1}+b^{n-1}), 1\},
	\end{equation}
where $W^n_{j}$ is the $j$-th row of $W^n$ and  $\{W^n_{j}, b^n_j,1\} $  represents the linear space generated by variables $W^n_{j}$, $b^n_j$, and $1$ (Similar for $\{\mathcal{P}_2(W^{n-1} \tilde{f}^{n-1}+b^{n-1}), 1\}$).  {{More specifically, the linear product structure means that the approximated polynomial of $y_j(x;\theta)$ is a special case of the product of two linear spaces generated by  $\{W^n_{j}, b^n_j,1\}$ and $\{\mathcal{P}_2(W^{n-1} \tilde{f}^{n-1}+b^{n-1}), 1\}$.}}
Then the homotopy setup in (\ref{homotopy}) is revised as
	\begin{eqnarray}
		\label{linear_product1}
		H(W^n_{j},b^n_j,t)&=&(1-t) (W^n_{j} \mathcal{P}_2(W^{n-1} \tilde{f}^{n-1}+b^{n-1}) + b^n_j-y_{j}) \nonumber +t(\alpha_1 W^n_{j}+\alpha_2b^n_j+\alpha_3)(\beta_1\mathcal{P}_2(W^{n-1} \tilde{f}^{n-1}+b^{n-1})+\beta_2),\\
	\end{eqnarray}
where  $\alpha_i$ and $\beta_i$ are random numbers (referred to the generic points in \cite{bates2013numerically,SWbook}) and are assumed to  follow the normalized Gaussian distribution, namely, $\mathcal{N}(0,1)$.
When $t=1$, the start system, $H(W^n_{j},b^n_j,1)=0$ ($G(\theta)=0$ in (\ref{homotopy})), is solved by two linear systems below
	\begin{equation}
A\left(
\begin{array}{c}
\big(W^n_{j}\big)^T\\
b^n_j
\end{array}\right)=-\alpha_3 \hbox{~and~} \beta_1\mathcal{P}_2(x)(\cdot)=-\beta_2,
	\end{equation} where $A=[\alpha_1, \alpha_2]\in R^{(m_{n-1}+1)\times (m_{n-1}+1)}$, $\alpha_3\in R^{(m_{n-1}+1)\times 1}$, $\beta_1\in R^{m_{n-1}\times m_{n-1}}$, and $\beta_2\in R^{m_{n-1}\times 1}$.
 Based on the Xavier initialization \cite{pascanu2013difficulty}, we have the variances between the input and the output be  identical, namely,
	\begin{equation}
(m_{n-1}+1)var(A_{k,i}) var(W^n_{j,i})=var(\alpha_{3,k}),\nonumber
	\end{equation}
  which implies that \[var(W^{n}_{j,i})=var(b^n_{j})=\frac{1}{m_{n-1}+1} \hbox{~and~} var(\mathcal{P}_2(W^{n-1} \tilde{f}^{n-1}+b^{n-1})_i)=\frac{1}{m_{n-1}}.\]
  Since $\mathcal{P}_2(x)\approx ReLU(x)$, we have the variance of each component of the ($n-1$)-st layer to be $\frac{2}{m_{n-1}}$ \cite{he2015delving}.
Therefore we obtain
  	\begin{eqnarray}
 W^{n-1}_{j} \mathcal{P}_2(W^{n-2} \tilde{f}^{n-2}+b^{n-2}) + b^{n-1}_j \sim \mathcal{N}(0,{\frac{2}{m_{n-1}}})\nonumber\in \{W^{n-1}_{j}, b^{n-1}_j,1\} \times \{\mathcal{P}_2(W^{n-2} \tilde{f}^{n-2}+b^{n-2}), 1\}.\nonumber
	\end{eqnarray}
  Similarly, by solving the start system of the homotopy setup (\ref{linear_product1}), we have
  	\begin{equation}
A\left(
\begin{array}{c}
\big(W^{n-1}_{j}\big)^T\\
b^{n-1}_j
\end{array}\right)=-\alpha_3 \hbox{~and~} \beta_1\mathcal{P}_2(W^{n-2} \tilde{f}^{n-2}+b^{n-2})=-\beta_2,
	\end{equation} where  $\alpha_{3,i}\sim\mathcal{N}(0,{\frac{2}{m_{n-1}}})$. Then we have
\[var(W^{n-1}_{j,i})=var(b^{n-1}_j)=\frac{2}{m_{n-1}(m_{n-2}+1)} \hbox{~and~} var(\mathcal{P}_2(W^{n-2} \tilde{f}^{n-2}+b^{n-2})_i)=\frac{1}{m_{n-2}}.\] Therefore, in general, we conclude the variance for each component of weights and bias  on the $\ell$-th layer becomes
  	\begin{equation}
var(W^\ell_{j,i})=var(b_{j}^\ell)=\frac{2}{m_{\ell}(m_{\ell-1}+1)}.\end{equation}

Moreover, there are many solutions in the start system of the homotopy setup in (\ref{linear_product1}) by randomly choosing $\alpha_i$ and $\beta_i$. The number of solutions is determined by the degree of the polynomial system. More specifically,  on the complex plane, the degree of each variable on $\ell$-th layer based on the $\mathcal{P}_2$ polynomial approximation is $2^\ell$. Thus the total degree of the polynomial system is $1+2+2^2+\dots +2^n=2^{n+1}-1$.  Based on Bézout's  theorem \cite{bates2013numerically,SWbook}, the number of solutions is $2^{2^{n+1}-1}$ which grows exponentially as $n$ is large. Since the ratio of  solutions involving changes of weights and bias on the $\ell$-th layer to the whole solution set is  $\frac{2^\ell}{2^{n+1}-1}$,  we use this probability to re-initialize the weights on $\ell$-th layer until we have the desired loss. In theory, by re-initializing $2^{2^{n+1}-1}$ times, we can guarantee to compute all the local minima including the global minima; in practice, we may only need a few re-initialization before obtaining the desired loss. Finally, the LPS initiation process is summarized in {\bf Algorithm \ref{alg1}}.

\begin{algorithm}
 \KwData{The width of each layer $m_\ell$, $\ell=1,\dots, n$, $m_0=\dim(X)$.}
 \KwResult{Initialization of $\{W^\ell,b^\ell\},$ $\ell=1,\dots, n$ }
{\bf Step 1 (Initialization):}

 \For{$\ell=1,\dots,n-1$}{
 Initialize $\{W^\ell,b^\ell\}\sim\mathcal{N}(0,\frac{{2}}{{m_{\ell}(m_{\ell-1}+1)}})$;
}
 Initialize $\{W^n,b^n\}\sim\mathcal{N}(0,\frac{1}{m_{n-1}+1})$;

{\bf Step 2 (Re-initialization):}

Randomly choose an integer $d$ in $(0, 2^{n+1}-1)$;

 \For{$\ell=n,\dots,1$}{
$d_\ell=d\%2$ and $d=[d/2]$;

\If{$d_\ell==1$}
{initialize $\{W^\ell,b^\ell\}$ in {\bf Step 1} with a 50\% probability;}
  }

 \caption{The LPS initialization algorithm}
 \label{alg1}
\end{algorithm}

\begin{remark}
 We use $d\%2$ and $[d/2]$ to denote the remainder and the quotient of $d$ by dividing $2$, respectively. In other words, we convert $d$ to a $n$-bit binary number. We now make three comments on the LPS initialization:
\begin{itemize}

\item {\bf Bias initialization: } The biases can also be initialized to zero because the gradients with respect to bias depend only on the linear activation of that layer not on the gradients of the deeper layers. Thus there is no diminishing or explosion of gradients for the bias terms.
  \item {\bf Stopping criteria:} If  the re-initialization does not improve the training loss, we may stop the initialization algorithm. Otherwise, we go to {\bf Step 2} for another re-initialization.
\item {\bf Apply to other activation functions:} Based on the  derivation of the LPS initialization, the variance in {\bf Algorithm \ref{alg1}} works for the ReLU activation function only. The variance of other activation functions needs to be further derived, for instance, the distribution for the tanh activation function is $\mathcal{N}(0,\frac{1}{{m_{\ell}(m_{\ell-1}+1)}})$.
\end{itemize}

\end{remark}

\section{Theoretical analysis of the LPS initialization on the dying ReLU}
The dying ReLU occurs when the weights are negative such that the ReLU neurons become inactive and remain to be zero for any input \cite{lu2019dying}. Therefore, the gradient is zero so that large parts of the neural network do nothing. If the neural networks are deep, the dying ReLU may even occur at the initialization step and the whole training process based on existing initialization algorithms  fails at the very beginning. The LPS initialization strategy resolves this issue with a theoretical guarantee by the following two theorems.

\begin{theorem} \label{thm:upper-bound-re-ini}
{\bf (One re-initialization)}
	If a ReLU feed-forward neural network $y(x,\theta)$ with $n$ layers,
	each having width $m_1, \dots, m_n$,
	is re-initialized {\underline{once}} by the LPS initialization, then the probability of the dying ReLU occurring is
	\begin{equation}\label{pro}
	P\left(  y(x; {\theta}) \text{ is born dead in } \Omega \right) \le
	1 - \prod_{\ell=1}^{n-1} \left(1 -
	{2^{-m_\ell}\left((1-p_\ell) + p_\ell (1 - \frac{\delta}{2} )^{m_\ell} \right)}\right),
	\end{equation}
	where $\delta\leq \frac{1}{2}$ is a constant independent of $\ell$ and $p_\ell=\frac{2^\ell}{2^{n+1}-1}$ is the probability of choosing the $\ell$-th layer to re-initialize. Here $\Omega=[-c,c]^{m_0}$, $\forall c>0$.
\end{theorem}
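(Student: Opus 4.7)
The plan is a two-step decomposition: first reduce ``born dead in $\Omega$'' to a union over layers of ``whole layer is dead'' events, then bound each per-layer probability by composing the Step-1 Gaussian dying bound with the $p_\ell$-weighted re-initialization factor. Throughout, I would exploit the symmetry $(w,b)\mapsto(-w,-b)$ of the LPS Gaussian law together with the center-symmetry of $\Omega=[-c,c]^{m_0}$.

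First, let $A_\ell$ ($\ell=1,\dots,n-1$) denote the event that every ReLU unit at hidden layer $\ell$ outputs $0$ on all of $\Omega$. If no $A_\ell$ occurs then, propagating forward, some coordinate of $\tilde f^{n-1}$ is non-constant on $\Omega$ and $y(x;\theta)$ is not born dead. Hence
\[
P\bigl(y(x;\theta)\text{ born dead in }\Omega\bigr)\ \le\ 1 - P\!\left(\bigcap_{\ell=1}^{n-1} A_\ell^c\right).
\]
Because LPS samples the parameters of distinct layers from independent Gaussians and decides re-initialisations with independent coins, I would condition on the benign event that the input region fed to layer $\ell$ contains a neighbourhood of the origin, and apply a tower / negative-association argument to factorise, obtaining
\[
P\!\left(\bigcap_{\ell=1}^{n-1} A_\ell^c\right)\ \ge\ \prod_{\ell=1}^{n-1}\bigl(1 - P(A_\ell)\bigr).
\]

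Next I would bound $P(A_\ell)$ for a single layer. By symmetry of the LPS Gaussian weights/biases together with the centre-symmetry of $\Omega$, a single ReLU neuron is dead on $\Omega$ with probability at most $1/2$, and by independence of the $m_\ell$ neurons in layer $\ell$ the Step-1 draw is entirely dead with probability at most $2^{-m_\ell}$. For the re-initialisation: conditioned on an already all-dead layer, with probability $1-p_\ell$ the layer is not re-initialised and remains dead (factor $1$); with probability $p_\ell$ a fresh independent draw is taken. A quantitative refinement of the single-neuron-dying computation---a Gaussian anti-concentration / volume estimate for the half-space $\{w^\top x+b\le 0\}$ against $\Omega$ using the LPS variance $2/(m_\ell(m_{\ell-1}+1))$---yields that a fresh single neuron is dead with probability at most $1-\delta/2$ for an absolute $\delta\le 1/2$, so the entire fresh draw is dead with probability at most $(1-\delta/2)^{m_\ell}$. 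Assembling,
\[
P(A_\ell)\ \le\ 2^{-m_\ell}\Bigl[(1-p_\ell) + p_\ell\bigl(1-\tfrac{\delta}{2}\bigr)^{m_\ell}\Bigr],
\]
and plugging this into the product from the previous step delivers the stated bound \eqref{pro}.

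The main obstacle I anticipate is the layer-wise factorisation: the events $A_\ell^c$ are not literally independent, since the activation region at layer $\ell$ is determined by the cascaded ReLU outputs of layers $1,\dots,\ell-1$. The cleanest route is to condition on the benign event that each preceding layer keeps a zero-symmetric box in its output support---this holds with probability close to one by the same per-neuron symmetry and is preserved under Gaussian mixing---and then invoke that the layer-$\ell$ Gaussian draw is independent of everything earlier, so its symmetry still gives the $1/2$ per-neuron bound. The secondary delicate point is isolating the constant $\delta$ in the fresh-draw bound: this is a standard Gaussian hyperplane calculation but must be carried out with the LPS-specific variance, and is where the restriction $\delta\le 1/2$ naturally arises.
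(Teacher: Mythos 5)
Your skeleton matches the paper's---a layer-wise product, a $2^{-m_\ell}$ factor from sign-symmetry of the Gaussian draw, and a $(1-p_\ell)+p_\ell(\cdot)$ split on the re-initialization coin---but two of your steps do not hold as described. First, the cross-layer factorization: the events $A_\ell^c$ are strongly dependent through the cascaded activations, and neither negative association nor your proposed benign event rescues this; in particular, the event that ``the input region fed to layer $\ell$ contains a neighbourhood of the origin'' is empty for $\ell\ge 2$, since ReLU outputs lie in the nonnegative orthant. The paper sidesteps the issue by fixing a single point $x\in\Omega$ and observing $A^c_{\ell,x}\subset A^c_\ell$ (a layer dead on all of $\Omega$ is in particular dead at $x$), so it suffices to lower-bound $P(A^c_{n,x})$; the chain rule then gives an exact telescoping product $\prod_j P(A^c_{j,x}\mid A^c_{j-1,x})$, and each conditional factor is computed using only the independence of the layer-$j$ parameters from everything earlier together with the existence of one strictly positive coordinate $[\sigma(\bar f^{j-2}(x))]_k>0$ guaranteed by the conditioning. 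No negative-association or benign-event argument is needed.

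Second, your re-initialization step bounds the wrong event. By conditioning on ``an already all-dead layer'' and asking whether the fresh draw stays dead, you are estimating $P(\text{dead after Step 2}\cap\text{dead after Step 1})$, not $P(\text{dead after Step 2})$. Algorithm 1 resamples each entry independently with probability $1/2$, so re-initialization can also kill a neuron that was alive after Step 1; the event $\{\text{dead after Step 2}\}\setminus\{\text{dead after Step 1}\}$ has positive probability under $R_\ell$ and your decomposition never accounts for it. The paper instead bounds the post-re-initialization per-neuron dead probability directly, with no conditioning on the Step-1 layer being dead: it splits on the sign of the single entry $W^{j-1}_{sk}$ multiplying the known-positive input coordinate and on whether that entry is resampled, obtaining contributions $\frac{1}{2}(\frac{1}{2}-\delta_{j-1})$, at most $\frac{1}{8}$, and $\frac{1}{2}\cdot\frac{1}{2}(\frac{1}{2}+\delta_{j-1})$, which sum to $\frac{1}{2}(1-\frac{\delta_{j-1}}{2})$ per neuron and hence $\mathcal P_2\le 2^{-m_{j-1}}(1-\frac{\delta_{j-1}}{2})^{m_{j-1}}$. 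Your final formula coincides with the theorem's, but the derivation you sketch would not justify it; you would also need to produce $\delta$ as the paper does, namely as an explicit erf-type integral against the positive activation value at the fixed point $x$, rather than as a half-space volume estimate over $\Omega$.
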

\begin{proof}
See the detailed proof in Appendix \ref{pr4.1}.

\end{proof}

\begin{theorem}\label{thm:upper-bound-re-ini-N}
{\bf (Multiple re-initialization)}
	If a ReLU feed-forward neural network $y(x,\theta)$ with $n$ layers	is re-initialized \underline{$N$ times} by the LPS initialization, then the probability of the dying ReLU occurring is
	\begin{equation}
P\left(  y(x; {\theta})  \text{ is born dead in } \Omega \right) \le 1 - \prod_{\ell=1}^{n-1} \left( 1 -  \frac{M_\ell}{2} \left( 1 -  \frac{p_\ell}{4}\right)^N \right)~ \rightarrow 0 \hbox{ (as $N\rightarrow\infty$),}\label{ntime}
	\end{equation}
where $M_\ell = m_\ell \times (m_{\ell-1} + 1)$.
\end{theorem}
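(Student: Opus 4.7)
The plan is to reduce the $N$-round result to the single-round analysis of Theorem \ref{thm:upper-bound-re-ini} by exploiting the independence of successive re-initialization rounds. As in the one-round case, I would decompose the event $\{y(x;\theta) \text{ is born dead on } \Omega\}$ into a union over layers $\ell = 1,\dots,n-1$ of the events ``layer $\ell$ collapses'', and then take complements to write the survival probability as a product $\prod_{\ell=1}^{n-1}(1 - q_\ell^{(N)})$, where $q_\ell^{(N)}$ denotes the probability that layer $\ell$ is dead after $N$ re-initialization rounds. Inequality \eqref{ntime} will then reduce to establishing $q_\ell^{(N)} \le \frac{M_\ell}{2}(1 - p_\ell/4)^N$ for each $\ell$.

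Next, for a fixed layer $\ell$, I would bound $q_\ell^{(N)}$ by tracking how much a single round shrinks it. Step 2 of Algorithm \ref{alg1} selects layer $\ell$ with probability $p_\ell$ (from the bit pattern of $d$) and, conditional on selection, re-draws each of its $M_\ell = m_\ell(m_{\ell-1}+1)$ parameters independently with probability $\tfrac12$ from the symmetric Gaussian of Step 1. The key claim I would establish is that for each parameter the per-round probability of flipping from a ``bad'' configuration (the one identified as killing the layer in the proof of Theorem \ref{thm:upper-bound-re-ini}) to a ``good'' one is at least $p_\ell\cdot\tfrac12\cdot\tfrac12 = p_\ell/4$: the first $\tfrac12$ is the conditional re-draw probability, the second $\tfrac12$ is the probability that a zero-mean Gaussian lands on the correct side. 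Independence of the $N$ rounds then produces a geometric decay, while a union bound over the $M_\ell$ parameters (each initially ``bad'' with probability at most $\tfrac12$ under the symmetric Gaussian of Step 1) supplies the prefactor, giving
\[
q_\ell^{(N)} \;\le\; \frac{M_\ell}{2}\left(1 - \frac{p_\ell}{4}\right)^N.
\]
Plugging this per-layer estimate into the layer-wise product and taking complements yields \eqref{ntime}, and the limit $N\to\infty$ is immediate since $(1-p_\ell/4)^N \to 0$ whenever $p_\ell>0$.

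The main obstacle will be rigorously aligning the per-parameter ``revival'' event with the layer-level ``dead'' criterion used in Theorem \ref{thm:upper-bound-re-ini}: one must argue that flipping a single parameter of layer $\ell$ to the correct side is, modulo the uniform constants absorbed into $M_\ell/2$, enough to revive the layer in the sense of that theorem. This is where the structural work of Theorem \ref{thm:upper-bound-re-ini} pays off, because its quantitative single-round bound already encodes the relationship between individual weights and the whole-layer survival event; amplifying it by independence is essentially combinatorial. The two independences I will need, namely independence across the $N$ rounds (each round re-samples $d$ and the Bernoulli switches afresh) and independence across layers within a round (the bits of $d$ are independent, so the events ``layer $\ell$ is selected this round'' factorize in $\ell$), are both built into Algorithm \ref{alg1} and should require no further probabilistic machinery.
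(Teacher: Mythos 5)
Your proposal follows essentially the same route as the paper: survival is factored over layers, each layer's failure probability is bounded by a union bound over its $M_\ell$ parameters, and each parameter remains ``bad'' (negative) after $N$ rounds with probability $\tfrac12(1-p_\ell/4)^N$ --- the $\tfrac12$ from the initial symmetric Gaussian and the per-round escape rate $p_\ell\cdot\tfrac12\cdot\tfrac12$ from (layer selected) $\times$ (parameter re-drawn) $\times$ (new draw positive). Your direct per-round computation of the geometric factor is in fact slightly cleaner than the paper's, which conditions on the binomial number $k$ of times layer $\ell$ is selected and then on the number $t$ of re-samples, obtaining $P(\bar\theta^\ell_{ij}\le 0\,|\,R^k_\ell)=\tfrac12(3/4)^k$ and collapsing the outer sum via the binomial theorem to the same $\tfrac12(1-p_\ell/4)^N$. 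However, the step you flag as the ``main obstacle'' is misdirected: you propose to argue that flipping a \emph{single} parameter to the correct sign revives the layer, leaning on the quantitative machinery of Theorem~\ref{thm:upper-bound-re-ini}. That implication is false in general and is not needed. The union bound you wrote requires only the reverse, much cruder inclusion $\{\text{layer }\ell\text{ causes death}\}\subseteq\{\exists\text{ a negative parameter in layer }\ell\}$, i.e.\ that \emph{all} $M_\ell$ parameters being positive suffices for survival --- which is immediate because ReLU outputs are nonnegative and a positive bias makes every pre-activation strictly positive. The paper's proof uses exactly this sufficient condition (the event $E_\ell$) and never invokes Theorem~\ref{thm:upper-bound-re-ini}. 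You should also make explicit the monotonicity your geometric decay relies on: only currently negative parameters are eligible for re-sampling (as stated in the appendix's setup), so ``positive'' is an absorbing state; if every parameter were re-drawn symmetrically regardless of sign, the per-parameter negative probability would stay at $\tfrac12$ for all $N$ and the bound would fail.
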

\begin{proof}
See the detailed proof in Appendix \ref{pr4.2}.

\end{proof}
\begin{remark}
 If there is no re-initialization, then we have $p_\ell=0$. In this case, we can further simplify (\ref{pro}) as $\displaystyle P\left(  y(x; {\theta}) \text{ is born dead in } \Omega \right) \le
    1 - \prod_{i=1}^{n-1} \left(1 -2^{-m_i}\right)$, by assuming $m_i=m$, we have 
  \[
\lim_{n\rightarrow \infty}P\left(  y(x; {\theta}) \text{ is born dead in } \Omega \right) =1 \hbox{~and~} \lim_{m\rightarrow \infty}P\left(  y(x; {\theta}) \text{ is born dead in } \Omega \right) =0,\]
which means the dying ReLU must occur for deep neural networks ($n\rightarrow \infty$) without  re-initialization such as existing random initialization algorithms \cite{lu2019dying}. In order to tolerate the large number of layers, we have to increase the width of each layer ($m\rightarrow\infty$) to make the probability smaller. Moreover, based on {\bf Theorem} \ref{thm:upper-bound-re-ini}, we can significantly decrease the rate of dying ReLUs in a network with only one re-initialization.

For multiple re-initialization, the LPS initialization guarantees theoretically that the ReLU networks never die with probability one when the number of re-initialization goes to infinity.  However, on the other hand, when $N\rightarrow \infty$, almost all the initialized weights become positive which reduces the ReLU neural network to a linear neural network. Then the training will not benefit from this weight initialization. Therefore, due to the exponential decay in  (\ref{ntime}), only a few time re-initialization will make the initialized weights optimal for training.

\end{remark}

\section{Numerical Experiments}
In this section,  we apply the LPS initialization algorithm to both fully connected neural networks and convolutional neural networks with the ReLU activation function and compare it with the He initialization developed in \cite{he2015delving}.    {{All the experimental details and hyperparameters are reported in {\bf Appendix 6.5}.}}

\subsection{Fully Connected Neural Networks}
\begin{figure}
        \centering
         \includegraphics[width=0.42\textwidth]{./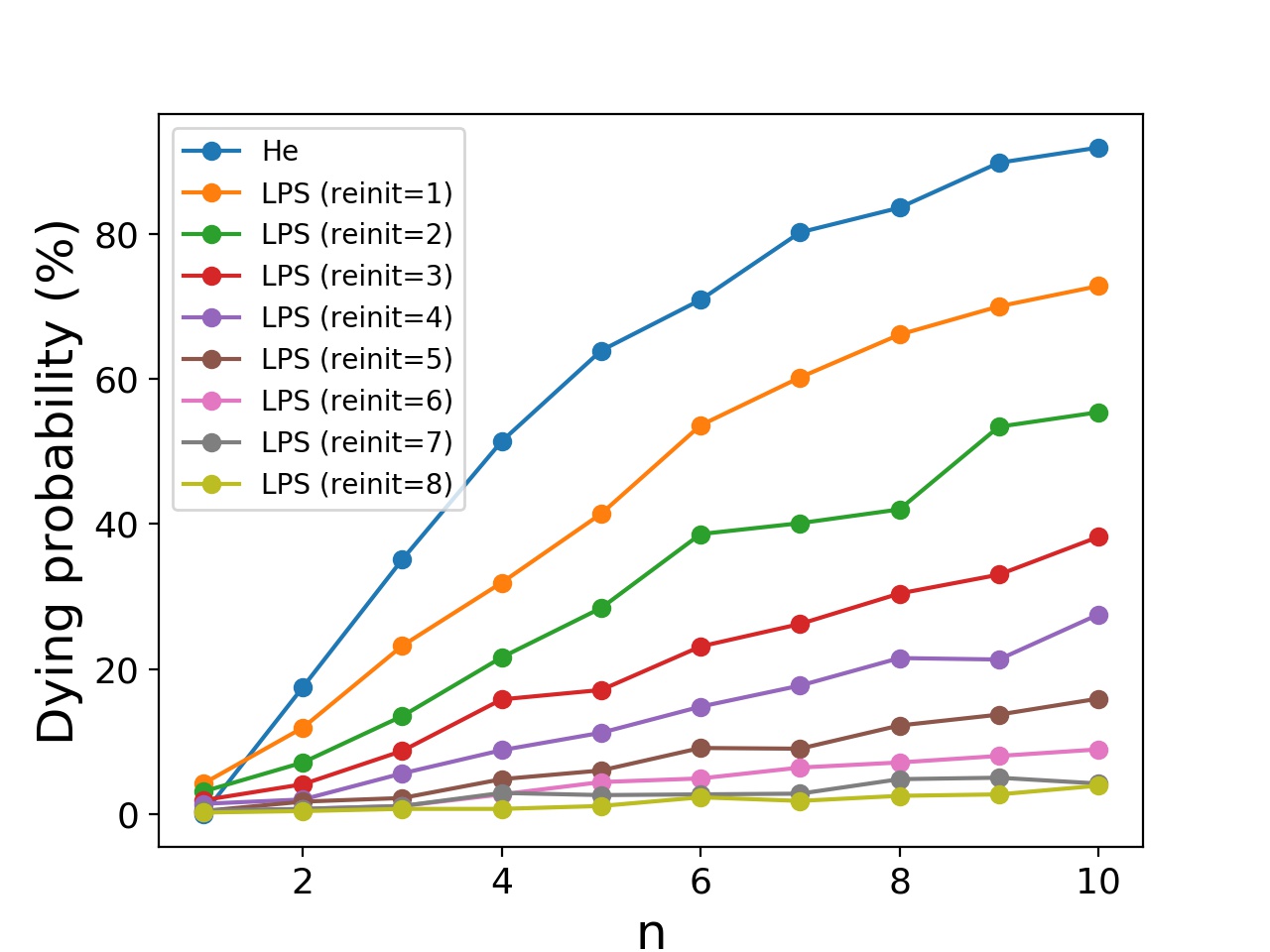}
          \includegraphics[width=0.42\textwidth]{./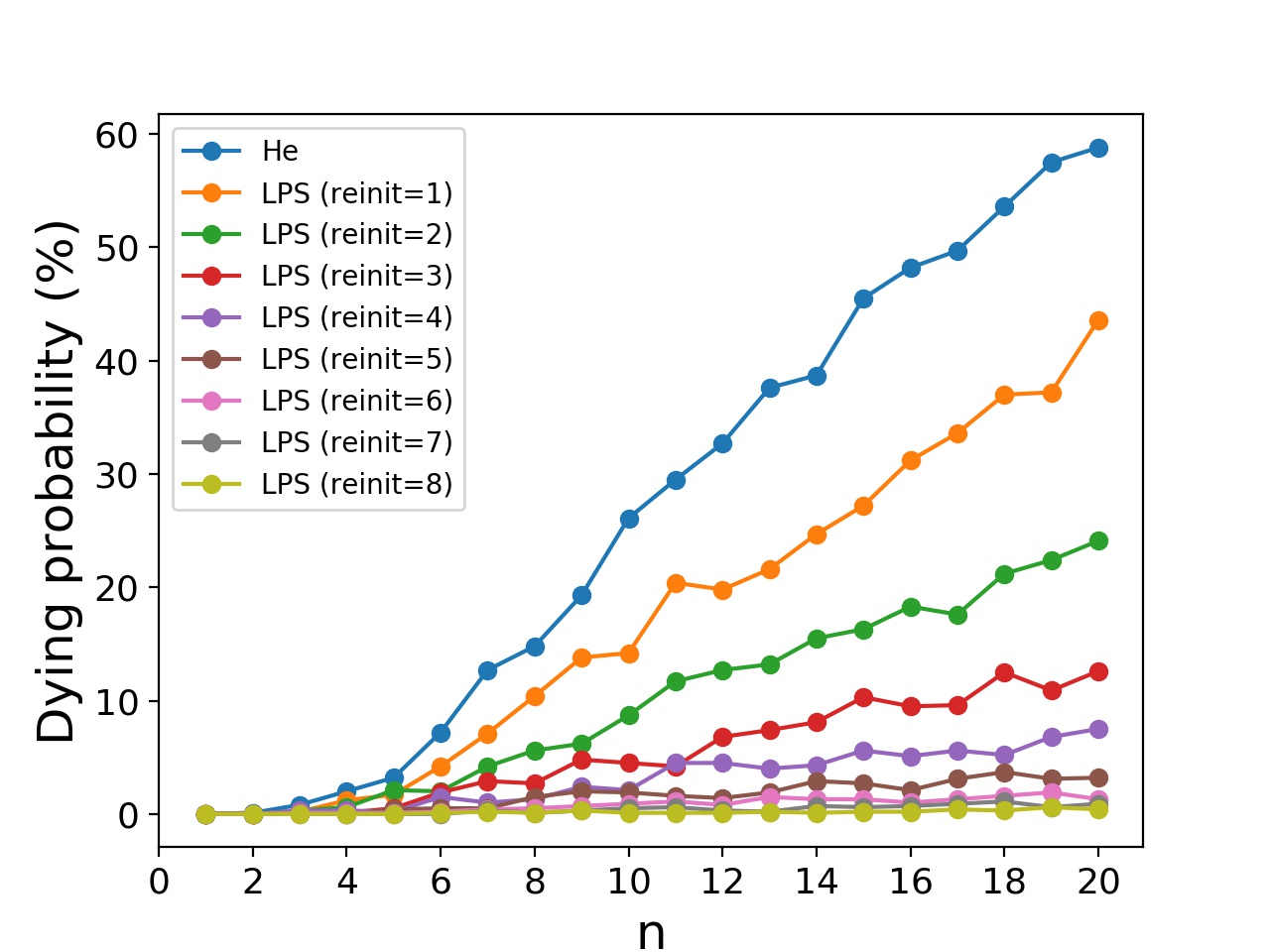}
          \caption{Probability of born dying ReLU on fully connected neural networks v.s. the number of layers, $n$, for different initialization algorithms based on 1000 initializations (Left: the 1D network of width 2; Right: the 2D network of width 4). We evaluate the neural networks on the uniform grid points in $[-1,1]^{m_0}$ with a stepsize $0.1$, calculate the variance of $y(x,\theta)$, and count as the ``dying ReLU'' if the variance is less than $10^{-10}$.}
          \label{fc_dying_prob}
\end{figure}
Based on our analysis, the dying ReLU occurs frequently in deep neural networks. In the first example, we compare our initialization algorithm with
He initialization on deep fully connected neural networks by approximating different functions. More specifically, we employ a 10-hidden-layer ReLU neural network with
hidden width of 2 to approximate the following 1D functions:
\begin{equation}
	f_1(x) = |x|, \quad f_2(x) = x\sin(5x), \quad f_3(x) = 1_{\{x>0\}} +
	0.2\sin(5x),
\end{equation}
and use a 20-hidden-layer ReLU neural network with hidden
width of 4 (both the input and output size are 2) to approximate the following 2D function:
\begin{equation}
	f_4(x_1,x_2) = \begin{bmatrix} |x_1+x_2|\\|x_1-x_2|\\ \end{bmatrix}.
\end{equation}
First, we use different initialization algorithms 1000 times and show the probability of born dying ReLU in Fig \ref{fc_dying_prob} by evaluating the networks only. It clearly shows that the LPS initialization has a much lower probability of a born dead neural network compared to He initialization.
Second, we train different neural networks by these two initialization algorithms to see their effects on the training process.   When the dying ReLU occurs, the training result becomes a flat line (x-axis) in 1D and a flat plane ($x-y$ plane) in 2D. We call this the ``collapse'' case. But even for the non-collapse case, we cannot guarantee the loss goes to zero since the training algorithm may be trapped in a local minimum (See Fig. \ref{f1_approx} for four non-collapse results which are randomly chosen from all non-collapse results of each function). Our algorithm focuses on the initialization part, not the training process. Therefore, we initialize $1000$ times for each initialization algorithm (up to 8 re-initialization) and count how many times the training does not collapse which is shown in Table \ref{table:fc}. It is obvious that the LPS initialization has a much lower chance with the dying ReLU during the training. Moreover, with only 6-7 re-initialization for $f_1$ and $f_2$, the LPS initialization achieves the optimal performance which confirms the conclusion of {\bf Theorem} \ref{thm:upper-bound-re-ini-N}.

\begin{figure}
        \centering
        \includegraphics[width=0.41\textwidth]{./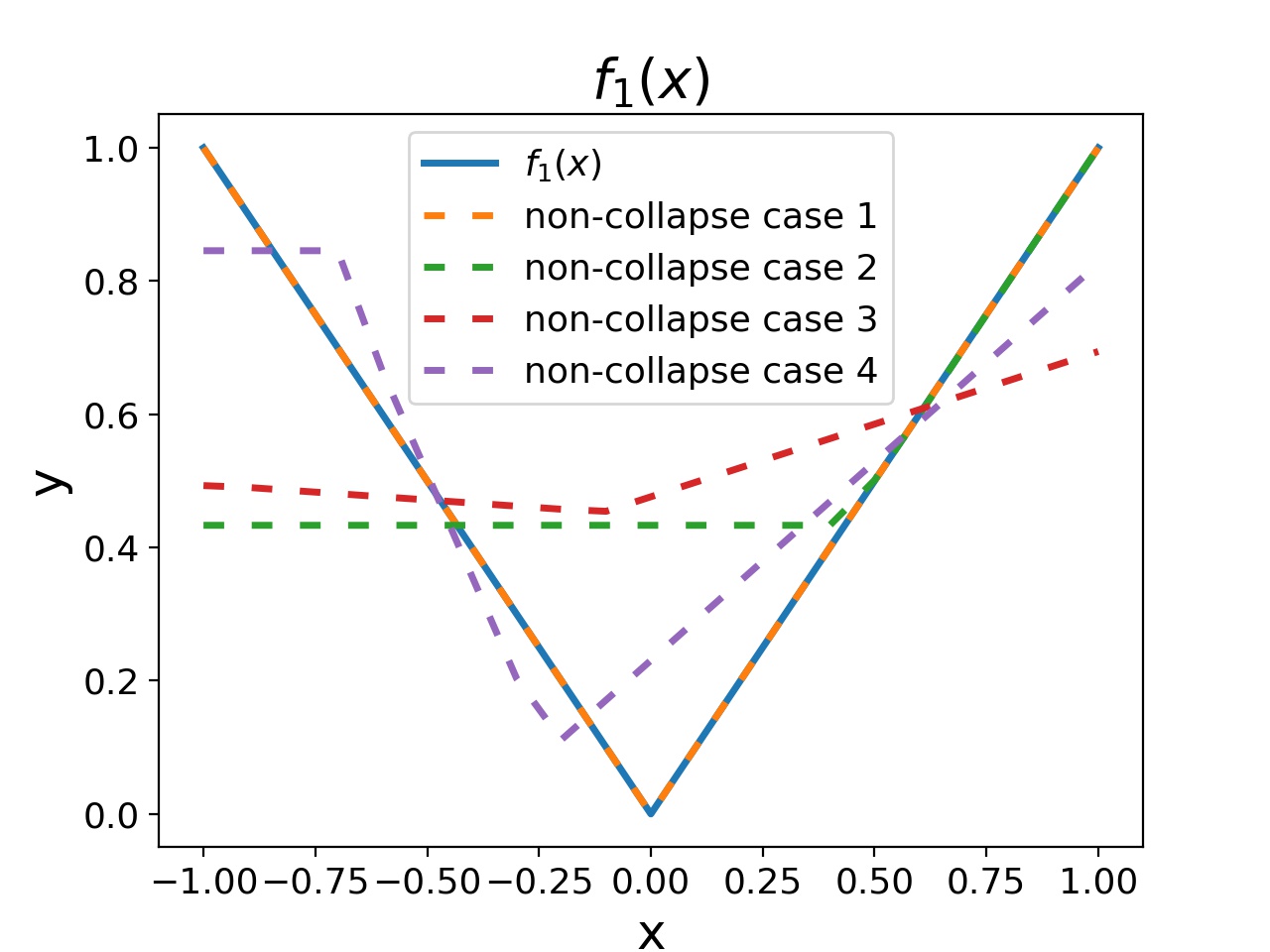}
              \includegraphics[width=0.41\textwidth]{./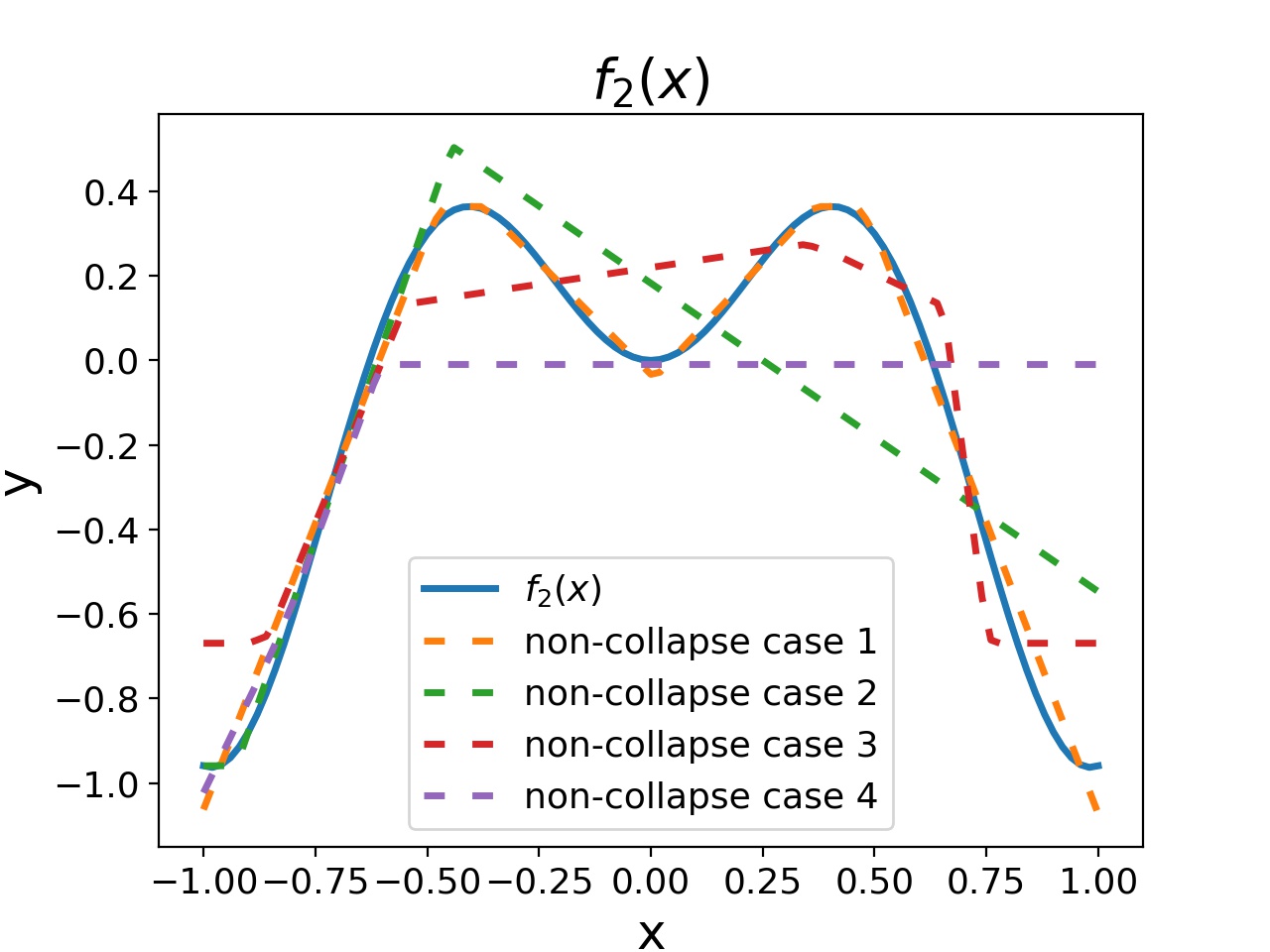}
        \includegraphics[width=0.41\textwidth]{./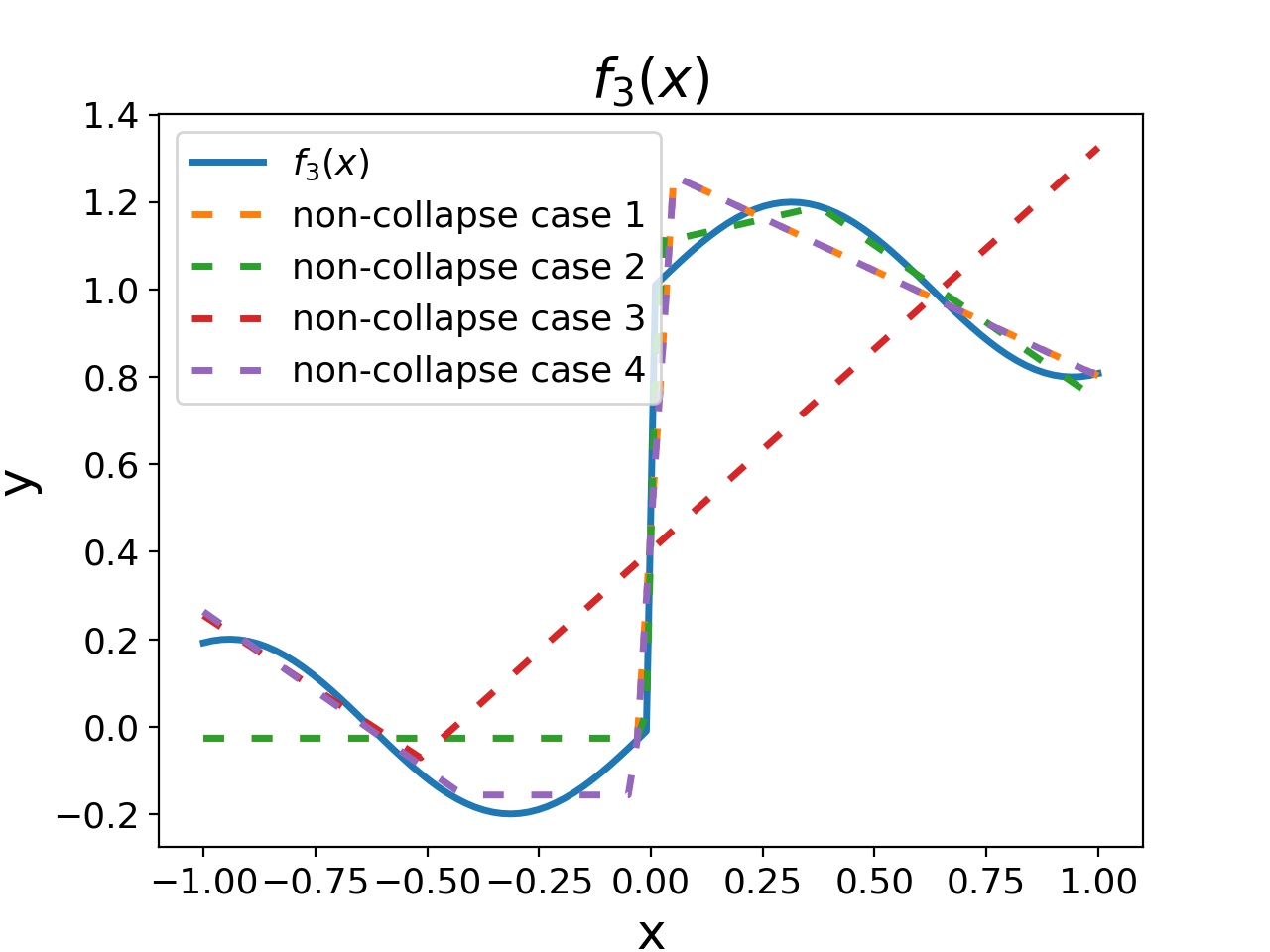}
        \includegraphics[width=0.41\textwidth]{./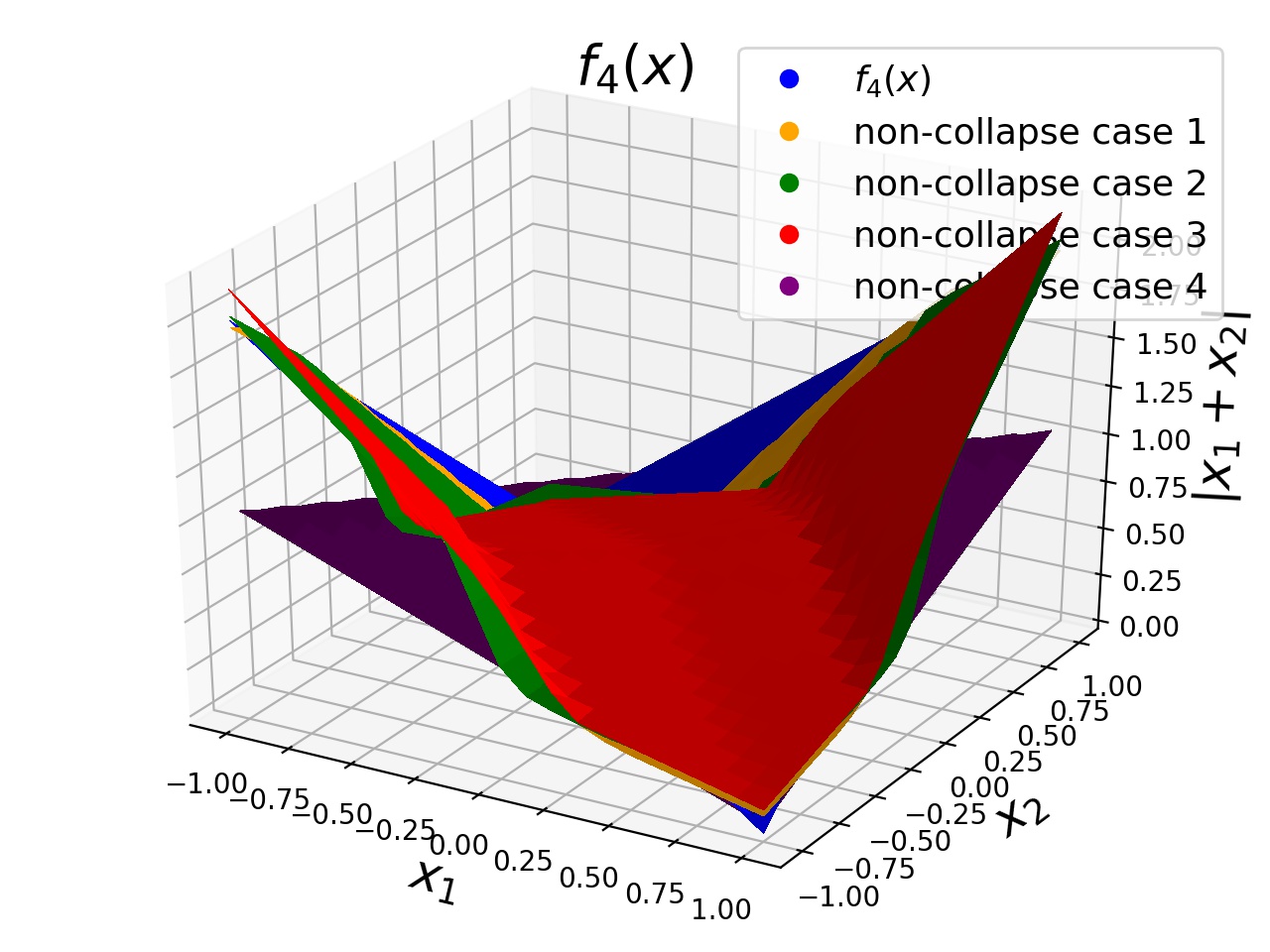}
          \caption{``Non-collapse'' approximation results by neural networks for $f_1-f_4$.}
        \label{f1_approx}
\end{figure}

\begin{table}[h]
\scriptsize
		\centering
	\caption{The percentage of finding ``non-collapse'' cases among 1,000 initialization for different initialization algorithms. ``Non-collapse'' cases for different functions are shown in Fig. \ref{f1_approx}. (The optimal LPS initialization is highlighted)}
		\begin{tabular}{|c|c|c|c|c|c|c|c|c|c|}
			\toprule
			function & He initialization &  \multicolumn{8}{|c|}{LPS initialization}\\
			\cmidrule{3-10}
			 &  &reinit=1 & reinit=2 & reinit=3 & reinit=4 & reinit=5& reinit=6 & reinit=7 & reinit=8\\
			\midrule
			  $f_1$ & 4.5\% & 9.5\% &18.8\% &28.1\% & 37.4\% & 40.2\% & 37.0\% & \bf{40.4\%} & 38.7\% \\
			 $f_2$ & 5.6\% & 8.7\% &15.8\% & 22.1\% & 22.3\% & 21.8\% & \bf{22.7\%} & 22.3\% & 20.8\%\\
			 $f_3$ & 3.2\% &12.4\% &29.2\% & 43.6\% & 58.0\% & 74.1\% & 81.9\% & 88.2\% & \bf{92.1\%} \\
			 $f_4$ &22.9\% & 38.7\% & 60.5\% & 75.1\% & 85.3\% & 92.7\% & 96.5\% & 98.3\% & \bf{98.9\%} \\
			\bottomrule
		\end{tabular}
	\label{table:fc}
\end{table}

\subsection{Convolutional neural networks}
We apply the LPS initialization to various benchmark models on public datasets. We use the cross-entropy as the loss function and
the stochastic gradient descent (SGD) as the training algorithm.

\subsubsection{LeNet networks on the MNIST dataset}
First, we benchmark  three different LeNet networks, LeNet-1, LeNet-4, and LeNet-5, on  the MNIST dataset  \cite{MNIST} to compare the
LPS initialization with He initialization. Each initialization algorithm is used to generate 100 different initialization (up to three re-initialization for the LPS initialization).  The error rates on the validation dataset for all the initialization algorithms are summarized in  Table
\ref{table:lenet}. The LPS initialization has lower error rates on both LeNet-4 and LeNet-5 and achieves a comparable error rate on LeNet-1 with He initialization.  On LeNet-1, the LPS initialization has a larger deviation since it can compute all the local minima but some of them may have high generalization errors. In order to quantify the performances of different algorithms further, we define  Good Local Minimum Percentage (GLMP) as the percentage of the validation accuracy greater
than 99\% in 100 initializations and show in Table
\ref{table:lenet}. We find that the LPS initialization (with the optimal number of re-initialization) has a much better chance to find a good local minimum on all three models. Moreover, the LPS initialization achieves the optimal performance with only one re-initialization for LeNet-1 and LeNet-4 and with two re-initialization for LeNet-5.
\begin{table}[h]
		\scriptsize
		\centering
	\caption{Comparison between He initialization and the LPS  initialization on the MNIST dataset by training three LeNet networks. All the results are based on 100 initializations for each algorithm. The error rate is mean$\pm$std while GLMP is the percentage of the validation accuracy greater
than 99\%. (The optimal LPS initialization is highlighted)}
		\begin{tabular}{|c|c|c|c|c|c|c|c|c|}
			\toprule
			 Network &  \multicolumn{2}{|c|}{He initialization} &  \multicolumn{6}{|c|}{LPS initialization}\\
			\cmidrule{4-9}
			 &  \multicolumn{2}{|c|}{} & \multicolumn{2}{|c|}{reinit=1} & \multicolumn{2}{|c|}{reinit=2} & \multicolumn{2}{|c|}{reinit=3}\\
			\cmidrule{2-9}
			  & GLMP & error rate & GLMP & error rate & GLMP & error rate & GLMP & error rate\\
			  \cmidrule{1-9}
			  LeNet-1 & 24\% & 1.14$\pm$0.21\% & \bf{29\%} & \bf{1.99$\pm$8.71\%} & 20\% & 2.10$\pm$8.71\% & 16\% & 2.08$\pm$8.70\%\\
			  \midrule
			  LeNet-4& 89\% & 4.30\%$\pm$17.21\% & \bf{95\%} & \bf{2.45$\pm$12.31\%} & 89\% & 6.86$\pm$22.43\% & 87\% & 8.61$\pm$25.16\%\\
			  \midrule
			  LeNet-5& 84\% & 13.94$\pm$31.38\% & 89\% & 10.29$\pm$27.54\% & \bf{93\%} & \bf{6.75$\pm$22.46\%} & 92\% & 6.76$\pm$22.46\%\\
			\bottomrule
		\end{tabular}
	\label{table:lenet}
\end{table}

\subsubsection{VGG networks on the CIFAR-10 dataset}
Secondly, we compare the LPS initialization with He initialization for different VGG networks, VGG9, VGG11, VGG13, and VGG16, on the CIFAR-10 dataset \cite{CIFAR10}.  We run 10 initialization for each of them and show the comparisons in Table \ref{table:vgg}. All the error rates of the LPS initialization are comparable with He initialization but the lowest error rate (LER) of the LPS initialization is lower than He initialization. This confirms that the LPS initialization can find local minima in a larger solution landscape (with a larger deviation and lower LER) while He initialization focuses on a smaller solution landscape. The LPS initialization achieves the optimal performance of up to three re-initialization for all four VGG networks.

\begin{table}[h]
		\scriptsize
		\centering
	\caption{Comparison between He initialization and the LPS  initialization on the CIFAR-10 dataset by training four VGG models. All the results are based on 10 initialization for each algorithm. The error rate is mean$\pm$std while LER is the lowest error rate among the 10 runs.}
		\begin{tabular}{|c|c|c|c|c|c|c|c|c|}
			\toprule
			 Network &  \multicolumn{2}{|c|}{He initialization} &  \multicolumn{6}{|c|}{LPS initialization}\\
			\cmidrule{4-9}
			 &  \multicolumn{2}{|c|}{} & \multicolumn{2}{|c|}{reinit=1} & \multicolumn{2}{|c|}{reinit=2} & \multicolumn{2}{|c|}{reinit=3}\\
			\cmidrule{2-9}
			  & LER & error rate & LER & error rate & LER & error rate & LER & error rate\\
			  \cmidrule{1-9}
			  VGG9 & 5.95\% & 6.09$\pm$0.10\% & 5.89\% & \bf{6.02$\pm$0.06\%} & \bf{5.82\%} & 6.06$\pm$0.13\% & 5.84\% & 6.11$\pm$0.16\%\\
			  \midrule
			  VGG11& 7.34\% & 7.62$\pm$0.17\% & 7.51\% & 7.89\%$\pm$0.21\% & 7.42\% & \bf{7.82$\pm$0.20\%} & \bf{7.17\%} & 7.95$\pm$0.36\%\\
			  \midrule
			  VGG13& 5.84\% & 5.96$\pm$0.06\% & \bf{5.73\%} & \bf{6.01\%$\pm$0.16\%} & 5.84\% & 22.87$\pm$33.56\% & 6.12\% & 39.88$\pm$40.92\%\\
			  \midrule
			  VGG16& 6.25\% & 6.39\%$\pm$0.10\% & 6.15\% & 14.74$\pm$25.08\% & \bf{5.98\%} & \bf{6.49$\pm$0.26\%} & 6.23\% & 14.84$\pm$25.05\%\\
			\bottomrule
		\end{tabular}
	\label{table:vgg}
\end{table}

\subsubsection{ResNets on different datasets}
Last, we validate the LPS initialization on different ResNets with public datasets.  The ResNet architecture \cite{he2016deep} does not have the vanishing gradient problem since skip connections act as gradient superhighways, allowing the gradient to flow unhindered. That's the main reason why He initialization performs very well on the ResNets.  We compare the LPS initialization with He initialization on different ResNets on CIFAR-10 and CIFAR-100 \cite{CIFAR10}. Results in Table \ref{table:resnet} show that the LPS initialization achieves better error rates than He initialization. Fig. \ref{error:resnet} shows that the LPS initialization has a comparable training performance to He initialization on the ImageNet dataset \cite{deng2009imagenet}.


\begin{table}		\tiny
		\centering
	\caption{Comparison between He initialization and the LPS  initialization on the CIFAR-10 and CIFAR-100 datasets by training different ResNet models with 10 initialization for each algorithm. 
The error rate is mean$\pm$std while LER is the lowest error rate among the 10 runs.}
		\begin{tabular}{|c|c|c|c|c|c|c|c|c|c|}
			\toprule
			Dataset& Network &  \multicolumn{2}{|c|}{He initialization} &  \multicolumn{6}{|c|}{LPS initialization}\\
			\cmidrule{5-10}
			 &&  \multicolumn{2}{|c|}{} & \multicolumn{2}{|c|}{reinit=1} & \multicolumn{2}{|c|}{reinit=2} & \multicolumn{2}{|c|}{reinit=3}\\
			\cmidrule{3-10}
			  && LER & error rate & LER & error rate & LER & error rate & LER & error rate\\
			  \cmidrule{1-10}
			  			\multirow{3}{*}{CIFAR-10}&ResNet-20 & 7.28\% & 7.50$\pm$0.15\% & 7.39\% & \bf{7.78$\pm$0.24\%} & \bf{7.20\%} & 7.88$\pm$0.23\% & 7.28\% & 7.79$\pm$0.26\%\\
			  \cmidrule{2-10}
			  &ResNet-56 & 6.26\% & 6.72$\pm$0.32\% & 6.06\% & \bf{6.49$\pm$0.34\%} & \bf{6.00\%} & 6.74$\pm$0.41\% & 6.29\% & 31.66$\pm$38.18\%\\
			  \cmidrule{2-10}
			  &ResNet-110& 5.93\% & 6.34$\pm$0.37\% & 5.76\% & 6.49\%$\pm$0.48\% & \bf{5.70\%} & 6.38$\pm$0.37\% & 6.06\% & \bf{6.37$\pm$0.23\%}\\
			  \cmidrule{1-10}
			  CIFAR-100&ResNet-164 & 26.36\% & 28.99$\pm$1.92\% & \bf{26.18\%} & 28.30$\pm$2.18\% & 26.79\% & \bf{27.68$\pm$0.53\%} & 27.15\% & 28.25$\pm$0.94\%\\
			  \bottomrule
		\end{tabular}
	\label{table:resnet}
\end{table}

\begin{figure}
        \centering
        \includegraphics[width=0.45\textwidth]{./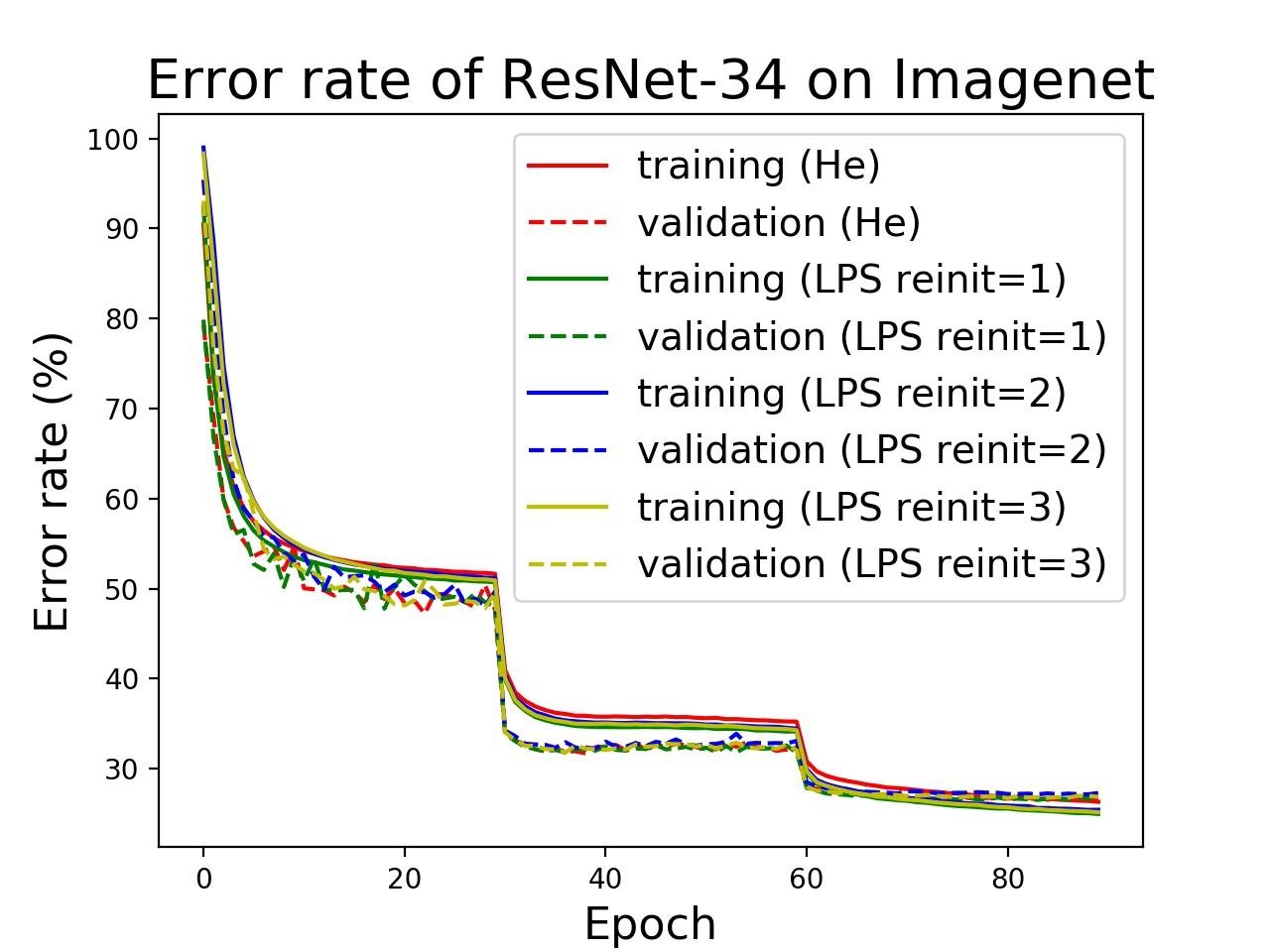}
        \includegraphics[width=0.45\textwidth]{./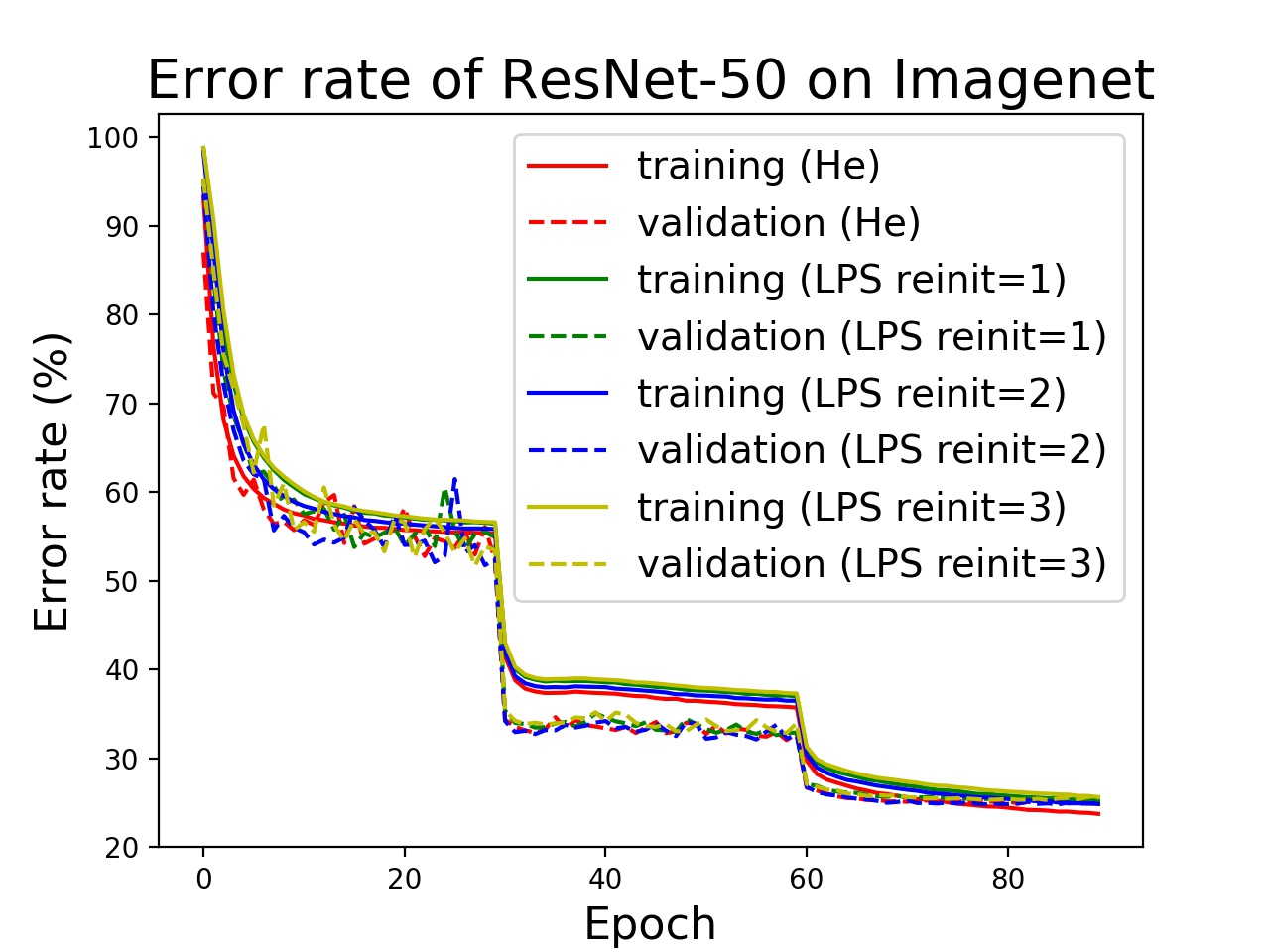}
          \caption{Error rates of the LPS initialization and He initialization v.s. Epoch on the ImageNet dataset. {\bf Left:} Error rates of the LPS initialization are  {\bf 26.54\%} (reinit=1),  27.15\% (reinit=2) and 26.73\% (reinit=3) comparing to 26.55\% with He initialization on the ResNet-34; {\bf Right:} Error rates of the LPS initialization are 25.15\% (reinit=1),  {\bf 24.80\%} (reinit=2) and 25.28\% (reinit=3) comparing to 24.91\% with He initialization on the ResNet-50.}
        \label{error:resnet}
\end{figure}


\section{Conclusion}
Weight initialization is crucial for efficiently training neural networks and therefore has become an active research area in machine learning. Current existing initialization algorithms are based on minimizing the variance of parameters between layers and lack  consideration of the nonlinearity of neural networks which is the most essential part of the training. In this paper, we analyze the nonlinearity of neural networks from a nonlinear computation point of view and develop a novel initialization procedure based on the linear product structure of the neural network via approximating the ReLU activation function by polynomials. The LPS initialization can guarantee to find all the local minima based on the theory of numerical algebraic geometry and avoid the dying ReLU issue with the probability one from a theoretical perspective. Then we test this new initialization procedure on various benchmark networks and public datasets to show its feasibility and efficiency. We hope the LPS initialization complements current random initialization algorithms especially when the dying ReLU occurs. The theoretical analysis of the LPS initialization is for feedforward networks only and will be extended to the recurrent setting in the future. We will also further explore the LPS initialization to other activation functions and parameter normalization to improve efficiency.

\section*{Appendix}
\subsection{Proof of Theorem 2.1}
\begin{proof}
The Sobolev space $H^k(\Omega)$ is defined as
\[H^k(\Omega)=\Big\{\int_\Omega |u(x)|^2dx<\infty, \int_\Omega |u^{(i)}(x)|^2dx<\infty,\forall i\leq k\Big\}\]
with
\[\|u\|_{H^k(\Omega)}=\sqrt{\sum_{i=0}^k\int_\Omega |u^{(i)}(x)|^2dx}.\]
Then the  estimate of the Legendre polynomial approximation~\cite{xiu2002wiener} is : if $\sigma \in H^k(-1,1)$, $k > 0$, we have
\[	\|\sigma - P_d \sigma \|_{L^2} \le C d^{-k} \|\sigma\|_{H^k}.\]
Since $ {\rm ReLU}(x)\in H^1(-1,1)$, we have the following approximation rate
\begin{equation}
\|\sigma - P_d \sigma \|_{L^2(-1,1)} \le C\frac{1}{d} \|\sigma\|_{H^1(-1,1)}.
\end{equation}

Next we consider the optimal Legendre polynomial approximation for $\sigma(x) = {\rm ReLU}(x)$ on $[-1,1]$ in the $L^{2}$ norm.
The Legendre's differential equation is given by
\begin{equation}
\frac{d}{dx}\left[ (1-x^2) \frac{d }{dx}L_k(x)  \right] = -k(k+1)L_k(x), \quad x\in (-1,1),
\end{equation}
for any $k \ge 2$.
By multiplying $x$ and integrating from $0$ to $1$ on both sides, we have
\begin{equation}\label{eq:xLk}
\int_{0}^1 x \frac{d}{dx}\left[ (1-x^2) \frac{d }{dx}L_k(x)  \right]  dx = \int_{0}^1 -k(k+1) xL_k(x) dx.
\end{equation}
Since
\begin{align*}
&\int_{0}^1 x \frac{d}{dx}\left[ (1-x^2) \frac{d }{dx}L_k(x)  \right]  dx\\
&=  -\int_{0}^1 (1-x^2)\frac{d}{dx}L_k(x) dx + \left.x(1-x^2)\frac{d}{dx}L_k(x)\right|_0^1 \\
&=\int_{0}^1 \frac{d(1-x^2)}{dx} L_k(x) dx - \left.(1-x^2)L_k(x)\right|_0^1 \\
&= -2 \int^1_0 x L_k(x)dx + L_k(0),
\end{align*}
we simplify \eqref{eq:xLk} as
\begin{equation}
\int_{0}^1 x L_k(x) dx = \frac{L_k(0)}{2 - k(k+1)},
\end{equation}
where
\begin{equation}
L_k(0) = \begin{cases}
\frac{(-1)^m}{4^m} \binom{2m}{m}, \quad &\text{for} \quad k = 2m,\\
0, \quad &\text{for}\quad k = 2m+1,
\end{cases}
\end{equation}
Moreover,
\begin{equation}
\|L_k(x)\|^2_{L^2(-1,1)} = \frac{2}{2k+1},
\end{equation}
then the coefficient of the optimal Legendre polynomial becomes
\begin{equation}
 \alpha_k = \frac{\int_{-1}^1 \sigma(x) L_k(x) dx}{\|L_k\|^2_{L^2(-1,1)}}= \frac{\int_0^1 x L_k(x) dx}{\|L_k\|^2_{L^2(-1,1)}}=\begin{cases}
\frac{(-1)^m(2k+1)}{2(2-k(k+1))4^m} \binom{2m}{m} \quad &\text{for} \quad k = 2m,\\
0, \quad &\text{for}\quad k = 2m+1,
\end{cases}
\end{equation}
for any $m \ge 1$.

\end{proof}

\subsection{Solving the polynomial system  by homotopy continuation method}\label{sec:NE}
The quadratic and fourth degree approximated polynomials have the following explicit formulas
\begin{equation}\label{keyP2}
P_2 \sigma = \frac{15}{32}x^2 + \frac{1}{2}x + \frac{3}{32} \hbox{~and~}P_4 \sigma =  -\frac{105}{256} x^4 +\frac{105}{128}x^2 +\frac{1}{2}x + \frac{15}{256}.
\end{equation}
The comparison between the ReLU activation function and $P_2\sigma(x)$ and $P_4\sigma(x)$ is shown in Fig. \ref{Fig:p2}.

We consider  polynomial systems arising from  fully connected neural networks to fit a 1D function $y=|x|$.

\noindent{\bf One-hidden-layer neural network:} We employ a one-hidden-layer ReLU network with width 2, namely,	\begin{equation}\label{1hidden}
y(x,\theta)=w^2 \text{ReLU}(w^1x+b^1)+b^2,
	\end{equation}
where $w^2\in R^{1\times2}, w^1\in R^{2\times1}, b^1\in R^{2\times1}$, and $b^2\in R$, and $\theta=(w^1_1,w^1_2,w^2_1,w^2_2,b^1_1,b^1_2,b^2)^T$. By using the $P_2 ReLU(x)$ activation function, then $y(x,\theta)$ becomes a quadratic polynomial
\[\tilde{y}(x,\theta)=a_2x^2+a_1x+a_0,\]
where
 \begin{eqnarray}\label{coefficients}
a_2&=&\frac{15}{32}(w^2_1 (w^1_1)^2 + w^2_2(w^1_2)^2), \nonumber\\
a_1&= &\frac{1}{2}(w^1_1w^2_1 + w^1_2w^2_2) + \frac{15}{16}(b^1_1w^1_1w^2_1 + b^1_2w^1_2w^2_2)\nonumber\\
a_0&=&b^2_1 + \frac{1}{32}w^2_1(15(b^1_1)^2 + 16b^1_1 + 3) + \frac{1}{32}w^2_2(15(b^1_2)^2 + 16b^1_2 + 3).
\end{eqnarray}
We consider a regression problem by using the ReLU neural network to fit  $y=|x|$ and have the following polynomial system
 \begin{eqnarray}\label{poly}
P(\theta)=
		\left(
		\begin{array}{cr}
\frac{15}{32}(w^2_1 (w^1_1)^2 + w^2_2(w^1_2)^2)-\frac{15}{16} \\
\frac{1}{2}(w^1_1w^2_1 + w^1_2w^2_2) + \frac{15}{16}(b^1_1w^1_1w^2_1 + b^1_2w^1_2w^2_2) \\
b^2_1 + \frac{1}{32}w^2_1(15(b^1_1)^2 + 16b^1_1 + 3) + \frac{1}{32}w^2_2(15(b^1_2)^2 + 16b^1_2 + 3)-\frac{3}{16} \\
(w^1_1)^2-(w^1_2)^2\\
(w^2_1)^2-(w^2_2)^2\\
(b_1^1)^2-(b^1_2)^2\\
b^{2}
		\end{array}
		\right).
\end{eqnarray}
By choosing a multi-homogenous start system \cite{SWbook}, the homotopy method is used to obtain six solutions after tracking $96$ solution paths, namely,
 \begin{eqnarray}\label{poly2}
\theta_1&=&(   4.7773,   -4.7773,  0.0438,    0.0438,   1.6228,    1.6228,     0)^T,\nonumber\\
\theta_2&=&(  1.0000,   -1.0000,       1.0000, 1.0000, 0.0000,   -0.0000, 0)^T,\nonumber\\
\theta_3&=&(    0.7588,   -0.7588,    1.7369,    1.7369,   -0.9801,   -0.9801,   0)^T,\nonumber\\
\theta_4&=&(  -1.0061,    1.0061,      0.9879,0.9879 -1.0690,   -1.0690,   0)^T,\nonumber\\
\theta_5&=&(   -1.4877,    1.4877,    0.4518,    0.4518,    0.1927,    0.1927,   0)^T,\nonumber\\
\theta_6&=&(     1.2318,   -1.2318,    0.6591,    0.6591,    0.0895,    0.0895,   0)^T.
\end{eqnarray}
Similarly, by solving the polynomial system with $P_4 ReLu$ approximation, we obtain five solutions after tracking $1280$ solution paths, namely,
 \begin{eqnarray}\label{poly4}
\theta_1&=&(    -1.3251,     0.3664,        0.8967,    -42.4003,  0.7037,    -0.7037,     0)^T,\nonumber\\
\theta_2&=&(  1.0000,   -1.0000,       1.0000, 1.0000, 0.0000,   -0.0000, 0)^T,\nonumber\\
\theta_3&=&(    -0.9418,    -0.2975,     1.9320,     61.3142,     0.4898,    -0.4898,   0)^T,\nonumber\\
\theta_4&=&(   -0.3664,     1.3251,    -42.4003,     0.8967,    -0.7037,     0.7037,   0)^T,\nonumber\\
\theta_5&=&(     -0.2975,    -0.9418,        61.3142,     1.9320, -0.4898,0.4898,   0)^T.
\end{eqnarray}
Then we plug the solutions into the ReLU neural network and plot the $y(x,\theta)$ in Fig. \ref{Fig:p2} which clear shows that we can recover $|x|$ by using the ReLU neural network via solving the polynomial system $P(\theta)$.

\begin{figure}[!ht]
\centering
        \includegraphics[width=1.5in]{./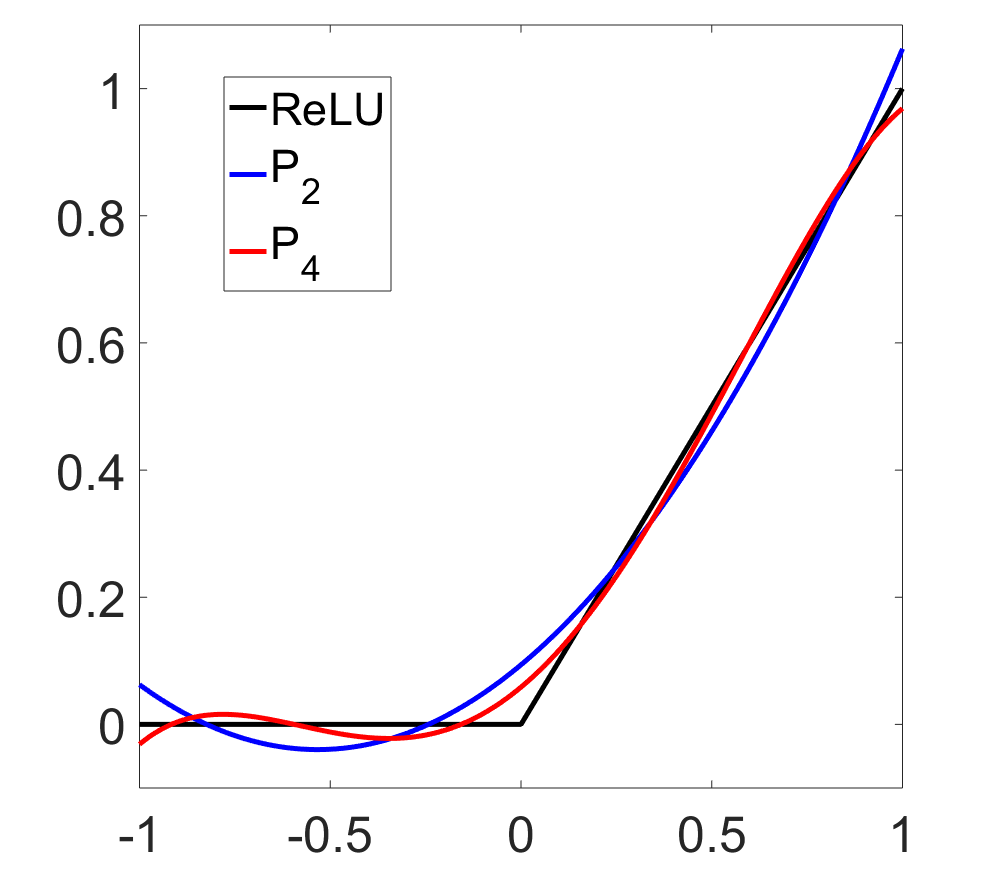}
  \includegraphics[width=1.5in]{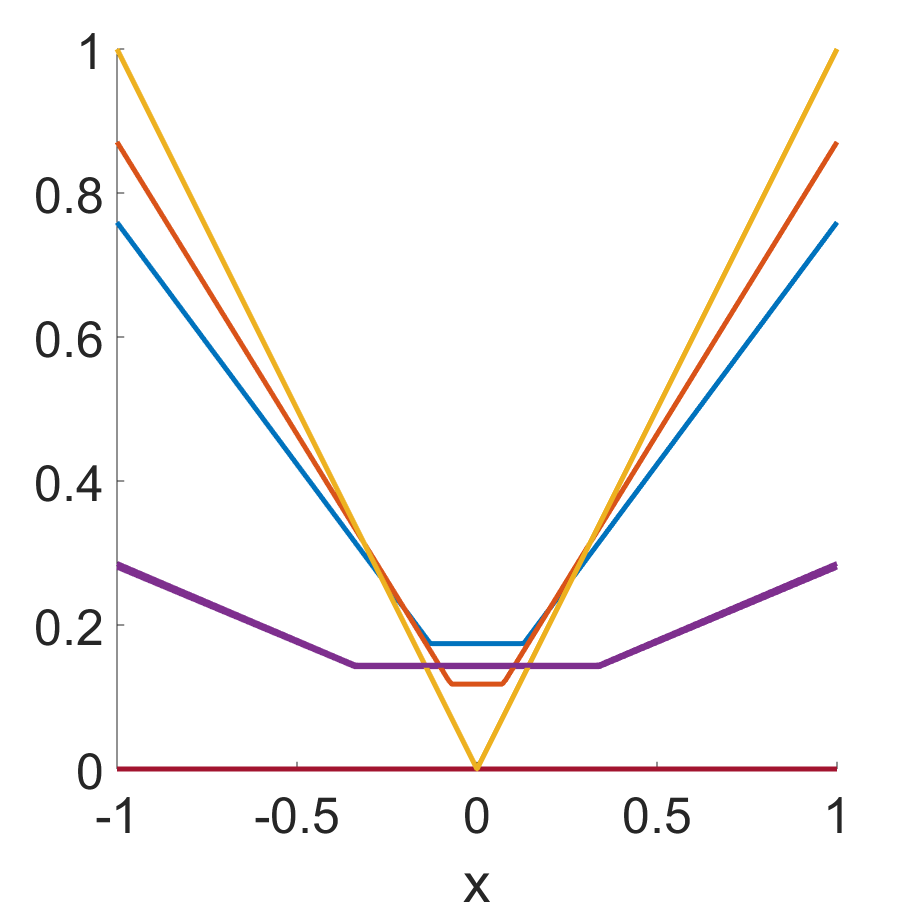}    \includegraphics[width=1.5in]{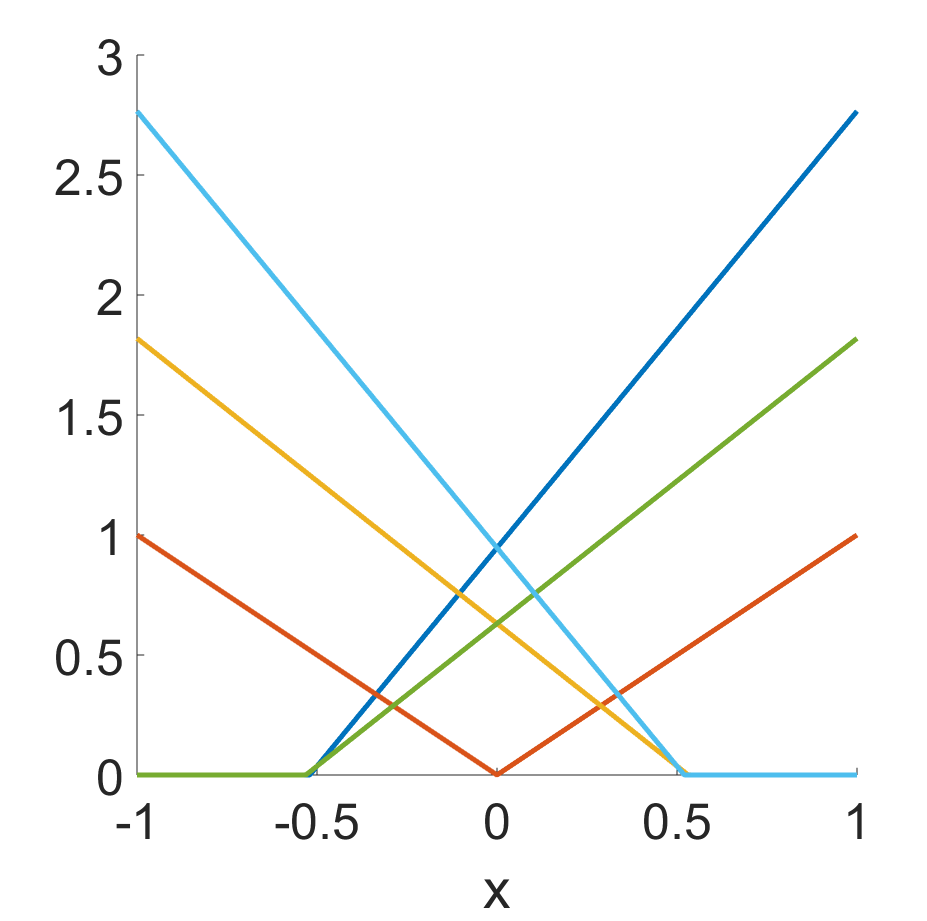}
\caption{{\bf Left:} A comparison between the ReLU function and approximated polynomials, $P_2$ and $P_4$; {\bf Middle and right:} Plots of the ReLU neural network, $y(x,\theta)$ defined in (\ref{1hidden}), with solutions in (\ref{poly2}) (middle) and (\ref{poly4}) (right) via the polynomial approximations. }\label{Fig:p2}
\end{figure}

\noindent{\bf Multi-hidden-layer neural network:} We employ $n$-hidden-layer ReLU networks with width 2 to approximate $f(x)=|x|$ with $P_2$ polynomial approximation.
The numerical results of different multi-hidden-layer neural networks are summarized in Table \ref{tab:ml} which clearly shows that the computational cost increase exceptionally as the layer goes deep. Therefore, the homotopy continuation method cannot be applied to solve large-scale neural networks directly.

\noindent{\bf Summary:} Instead of solving the optimization problem, we solve a system of polynomial equations directly, namely, $\tilde{y}(x_i)=|x_i|$, where $x_i$ is the sample points on $[-1,1]$. Therefore, the homotopy continuation method can be used to solve the neural networks after approximating the activation function by polynomials and provides good initialization for the original neural network but cannot be applied to deep neural networks due to the high computational cost.

\begin{table}
	\caption{Numerical results for different neural networks}\label{tab:ml}
	\centering
	\begin{tabular}{|c|c|c|}\hline
		\# of hidden layers &\# of solutions tracked &\# of real solutions\\\hline
		2& 865&10\\\hline
		3& 28,475&32\\\hline
		4&27,283,365 &129\\\hline
	\end{tabular}
\end{table}

\subsection{Proof of Theorem 4.2}\label{pr4.1}

{\bf Notations in the proof of Theorem 4.1}
\begin{itemize}
  \item {\bf Model without re-initialization:} We rewrite the neural network model as
\begin{equation}\label{model1}
f^\ell = W^\ell \sigma (f^{\ell-1}(x)) + b^\ell, \ell=1,\dots,n
\end{equation}
with $f^1 = W^1 x + b^1$ for all $x \in \Omega$. $\{ (W^\ell, b^\ell)\}_{\ell=1}^n$ are
initialized randomly such as
He's initialization~\cite{he2015delving} or the LPS initialization without initializations. We denote $W^{\ell}_{sk}$ for the element on $s$-th row and $k$-th column and  $b^\ell_{s}$ for the $s$-th element.

  \item {\bf Model with re-initialization:} The neural network with one re-initialization (or $N$ times re-initialization as in the proof of Theorem {\bf 4.2}) is denoted as
\begin{equation}\label{model2}
\bar f^\ell = \bar W^\ell \sigma ( \bar f^{\ell-1}(x)) + \bar b^\ell, \ell=1,\dots,n
\end{equation}
with $\bar f^1 = \bar W^1 x + \bar b^1$ for all $x \in \Omega$.
Here $\{ (\bar W^\ell, \bar b^\ell)\}_{\ell=1}^L$ are re-initialized with probability $p_\ell$ for these negative
elements.

  \item We consider these next sets related to the initialization of neural network model:
	\begin{align*}
	A_\ell &= \{\exists j \in \{1,\dots,\ell-1\} \text{ such that } \phi(\bar f^j( x)) = {0} \quad \forall x \in \Omega\},\\
	A_\ell^c &=
	\{\text{$\forall 1 \le j < \ell$ there exists $x \in \Omega$ such that } \phi(\bar f^{j}(x)) \ne {0}\},\\
	{A}_{\ell,x}^c &= \{\forall 1 \le j < \ell, \hspace{0.1cm} \phi(\bar f^{j}(x)) \ne {0} \}, \\
	{A}_{\ell,x} &= \{\exists \hspace{0.1cm} j \in \{1,\dots,\ell-1\} \text{ such that } \phi(\bar f^{j}(x)) = {0} \}.
	\end{align*}
	In addition, we define the following sets:
	\begin{align*}
	\bar D^{j-1}_{s,x} = \{ \bar {W}^{j-1}_s \phi(\bar f^{j-2}(x)) + \bar {b}^{j-1}_s \le 0 \}, \\
	 D^{j-1}_{s,x} = \{  {W}^{j-1}_s \phi( \bar f^{j-2}(x)) +  {b}^{j-1}_s \le 0 \},\\
	C_{sk} = \{ W^{j-1}_{sk} \text{ is chosen to be re-resampled} \},\\
	R_j = \left\{ \text{the $j$-th layer is chosen to be re-initialized} \right\},
	\end{align*}
	where $P(R_j) = p_j=\frac{2^j}{2^{n+1}-1}$ by {\bf Algorithm \ref{alg1}}.

  \item {\bf  The probability density function of the normal distribution}, namely $\mathcal N(0,1)$ is
\[\phi(x)=\frac{1}{\sqrt{2\pi}}e^{-1/2 x^2}.\]

{\bf The Gauss  error function:} is defined as	\begin{equation}
	{\rm erf}(x) = \frac{2}{\sqrt{\pi}} \int_{0}^x e^{-w^2} dw.
	\end{equation}

\end{itemize}

The notations and definitions in the proof are summarized in the Appendix.
  	Based on Lemma A.1 in \cite{lu2019dying}, we have
	\begin{equation}
	P(f^\ell(x) \text{ is born dead in  } \Omega) = P(A_\ell).
	\end{equation}
Because ${A}_{\ell,x}^c \subset A_\ell^c$ for all $x \in \Omega$, thus	\begin{equation}
	P(A_\ell) = 1-P(A_\ell^c) \le 1-P({A}_{\ell,x}^c).
	\end{equation}
	Moreover, since  $P({A}_{1,x}^c)=1$
	for any $x \neq 0$ in $\Omega$, we have the Bayes
	formula below:
	\begin{equation} \label{app:thm1:eqn3}
	P({A}_{\ell,x}^c)
	= P({A}_{\ell,x}^c | {A}_{\ell-1,x}^c)P({A}_{\ell-1,x}^c)
	= \dots
	= P({A}_{1,x}^c)\prod_{j=2}^\ell P({A}_{j,x}^c | {A}_{j-1,x}^c),
	\end{equation}
where \[P({A}_{j,x}^c | {A}_{j-1,x}^c)=1-P({A}_{j,x} |{A}_{j-1,x}^c).\]
Then we have
\begin{equation}\label{estimate}
\begin{aligned}
P(A_n) &=1 - \prod_{j=2}^{n} (1 - P({A}_{j,x} | {A}_{j-1,x}^c) ).
\end{aligned}
\end{equation}
Therefore, our proof mainly focus on estimating $P({A}_{j,x} | {A}_{j-1,x}^c)$, which has
the following decomposition:
		\begin{align*}
	&P({A}_{j,x} |{A}_{j-1,x}^c) \\
	=&P\left( \left( {A}_{j,x} \text{ and } R^c_{j-1} \right) \textbf{ OR }
	\left(  {A}_{j,x} \text{ and } R_{j-1} \right) | {A}_{j-1,x}^c\right) \\
	=& P(R_{j-1}^c) \prod_{s=1}^{m_{j-1}}  P\left( \left(  \bar D^{j-1}_{s,x} | R_{j-1}^c \right) | {A}_{j-1,x}^c\right)
	+ P(R_{j-1}) \prod_{s=1}^{m_{j-1}} P\left( \left( \bar D^{j-1}_{s,x} | R_{j-1} \right)  | {A}_{j-1,x}^c\right)  \\
	\triangleq& (1-p_{j-1})\mathcal P_{1} + p_{j-1} \mathcal P_{2},
	\end{align*}
where $m_j$ the width of $j$-th layer.
	Since \begin{equation}
	P\left( \left(  \bar D^{j-1}_{s,x} | R_{j-1}^c \right) | {A}_{j-1,x}^c\right) = P\left(   D^{j-1}_{s,x}  | {A}_{j-1,x}^c\right) = \frac{1}{2},
	\end{equation}
 we obtain
	\begin{equation}
	\mathcal P_{1} = \prod_{s=1}^{m_{j-1}}  P\left( \left(  \bar D^{j-1}_{s,x} | R_{j-1}^c \right) | {A}_{j-1,x}^c\right)  = \big(\frac{1}{2}\big)^{m_{j-1}}.
	\end{equation}
	
For $\mathcal P_{2}$, by the definition of  ${A}_{j-1,x}^c$,
 there exists $1 \le k \le m_{j-2}$ such that $[\phi(\bar f^{j-2}(x))]_k > 0$.
	Thus, for any $1 \le s \le m_{j-1}$, we have
	\begin{equation}
	\begin{aligned}
	&P\left( \left( \bar D^{j-1}_{s,x} | R_{j-1} \right)  | {A}_{j-1,x}^c\right) \\
	= &\underbrace{P(W^{j-1}_{sk} >0 ) P\left( \big( \bar D^{j-1}_{s,x} | W^{j-1}_{sk} >0 | R_{j-1} \big)  | {A}_{j-1,x}^c\right)}_{\Pi_{1}}\\
	+&\underbrace{P(W^{j-1}_{sk} \le 0 ) P\left( \big( \bar D^{j-1}_{s,x} | W^{j-1}_{sk} \le 0 | R_{j-1} \big)  | {A}_{j-1,x}^c\right)}_{\Pi_2}.	\end{aligned}
	\end{equation}

\noindent{\bf Computations of $\Pi_1$:}
Since $P(W^{j-1}_{sk} >0 ) = \frac{1}{2}$, $\bar W^{j-1}_{sk} = W^{j-1}_{sk}$, and $W^{j-1}_{sk} > 0$,
	we have
	\begin{equation}\label{P1}
	\begin{aligned}
	&P\left( \big( \bar D^{j-1}_{s,x} | W^{j-1}_{sk} >0 | R_{j-1} \big)  | {A}_{j-1,x}^c\right) \\
	=&P \left( \left.\sum_{t\neq k} \bar W^{j-1}_{st} [\sigma(\bar f^{j-2}(x))]_t + \bar b^{j-1}_s \le -\bar W^{j-1}_{sk} [\sigma(\bar f^{j-2}(x))]_k \right| \bar W^{j-1}_{sk} >0 \right),
	\end{aligned}
	\end{equation}
where $[\sigma(\bar f^{j-2}(x))]_t$ stands for the $t$-th element of the vector $\sigma(\bar f^{j-2}(x))$.
Because $\bar W^{j-1}_{st}$ and $\bar b^{j-1}_s$ are {\em independent and identically distributed}
with the same normal distribution $\mathcal N(0, v^2_{j-1})$ ($v_{j-1}=\frac{\sqrt{2}}{\sqrt{(m_{j-2}+1)m_{j-1}}}$ in {\bf Algorithm 1}), thus
	\begin{equation}
\mathcal X\triangleq \sum_{t\neq k} \bar W^{j-1}_{st} [\sigma(\bar f^{j-2}(x))]_t + \bar b^{j-1}_s \sim \mathcal N(0, \tilde{v}_{j-1}^2),
	\end{equation}
	where $\tilde{v}_{j-1}=v_{j-1}\sqrt{\sum_{t\neq k}[ \sigma(\bar f^{j-2}(x))]_t^2+1}$. Then (\ref{P1}) becomes
	\begin{equation}\label{eq:PI-}
	\begin{aligned}
	&P \left( \left.\mathcal X \le -\bar W^{j-1}_{sk} [\sigma(\bar f^{j-2}(x))]_k \right| \bar W^{j-1}_{sk} >0 \right) \\
=&\frac{P \left(\mathcal X \le -\bar W^{j-1}_{sk} [\sigma(\bar f^{j-2}(x))]_k\hbox{~and~} \bar W^{j-1}_{sk} >0 \right) }{P(\bar W^{j-1}_{sk} >0)} \\
	=&\frac{ \int_{0}^{\infty} \varphi\big(\frac{w}{v_{j-1}}\big)\int_{-\infty}^{-w [\sigma(\bar f^{j-2}(x))]_k} \varphi\big(\frac{y}{\tilde{v}_{j-1}}\big) d y dw }{\frac{1}{2}}\\
	=&2 \int_{0}^{\infty} \varphi\big(\frac{w}{v_{j-1}}\big)\int_{-\infty}^{-w [\sigma(\bar f^{j-2}(x))]_k} \varphi\big(\frac{y}{\tilde{v}_{j-1}}\big) d y dw\\
	=&2 \int_{0}^{\infty} \varphi\big(\frac{w}{v_{j-1}}\big) \left[ \frac{1}{2} - \frac{\tilde{v}_{j-1}}{\sqrt{\pi}}{\rm erf}\big(\frac{[\sigma(\bar f^{j-2}(x))]_k}{\sqrt{2}\tilde{v}_{j-1}} w\big)\right] dw \\
	=& \frac{1}{2} - 2\frac{\tilde{v}_{j-1}}{\sqrt{\pi}}\int_{0}^{\infty} \varphi\big(\frac{w}{v_{j-1}}\big) {\rm erf}\big(\frac{[\sigma(\bar f^{j-2}(x))]_k}{\sqrt{2}\tilde{v}_{j-1}} w\big) dw.
	\end{aligned}
	\end{equation}

By denoting
	\begin{equation}
	\delta_{j-1} = 2\frac{\tilde{v}_{j-1}}{\sqrt{\pi}}\int_{0}^{\infty} \varphi\big(\frac{w}{v_{j-1}}\big) {\rm erf}\big(\frac{[\sigma(\bar f^{j-2}(x))]_k}{\sqrt{2}\tilde{v}_{j-1}} w\big) dw\le \frac{1}{2},
	\end{equation}
we conclude that
	\begin{equation}
	\Pi_1 = P(W^{j-1}_{sk} >0 ) P\left( \left( \bar D^{j-1}_{s,x} | W^{j-1}_{sk} >0 | R_{j-1} \right)  | {A}_{j-1,x}^c\right) = \frac{1}{2}(\frac{1}{2} - \delta_{j-1}).
	\end{equation}
	
{\bf Computations of $\Pi_2$:}	By the definition of  $C_{sk}$,
 we have
	\begin{equation}
	P\left( (C_{sk} | W^{j-1}_{sk} \le 0 ) | R_{j-1} \right)  = \frac{1}{2}.
	\end{equation}
Therefore,
	\begin{equation}
	\begin{aligned}
	 &P\left( \left( \bar D^{j-1}_{s,x} | W^{j-1}_{sk} \le 0 | R_{j-1} \right)  | {A}_{j-1,x}^c\right)  \\
	 =&\underbrace{P\left( (C_{sk} | W^{j-1}_{sk} \le 0 ) | R_{j-1} \right)  P\left( \left( \bar D^{j-1}_{s,x} | C_{sk}  | W^{j-1}_{sk} \le 0 | R_{j-1} \right)  | {A}_{j-1,x}^c\right)}_{C_1}   \\
	 +&\underbrace{P\left( (C^c_{sk} | W^{j-1}_{sk} \le 0 ) | R_{j-1} \right)  P\left( \left( \bar D^{j-1}_{s,x} | C^c_{sk}  | W^{j-1}_{sk} \le 0 | R_{j-1} \right)  | {A}_{j-1,x}^c\right)}_{C_2}.
	\end{aligned}
	\end{equation}
Due to the independence, we have the following inequality:
	\begin{equation}
	P\left( \left( \bar D^{j-1}_{s,x} | C_{sk}  | W^{j-1}_{sk} \le 0 | R_{j-1} \right)  | {A}_{j-1,x}^c\right)  \le P\left( \left(  \bar D^{j-1}_{s,x} | R_{j-1}^c \right) | {A}_{j-1,x}^c\right) = \frac{1}{2}.
	\end{equation}
Because of the symmetry of the initialization and $[\phi(\bar f^{j-2}(x))]_k > 0$, we have $C_1 \leq \frac{1}{2}\times \frac{1}{2}$.
	
	For $C_2$, similarly to \eqref{eq:PI-}, we derive
	\begin{equation}
	\begin{aligned}
	&P\left( \left( \bar D^{j-1}_{s,x} | C^c_{sk}  | W^{j-1}_{sk} \le 0 | R_{j-1} \right)  | {A}_{j-1,x}^c\right) \\
	=&P \left( \left.\sum_{t\neq k} \bar W^{j-1}_{st} [\phi(\bar f^{j-2}(x))]_t + \bar b^{j-1}_s \le - W^{j-1}_{sk} [\phi(\bar f^{j-2}(x))]_k \right|  W^{j-1}_{sk} \le 0 \right)\\
	= &\frac{1}{2} + \delta_{j-1},
	\end{aligned}
	\end{equation}
which implies that $C_2 = \frac{1}{2}(\frac{1}{2} +\delta^{j-1})$.
	
Therefore
	\begin{equation}
	\begin{aligned}
	&P\left( \left( \bar D^{j-1}_{s,x} | R_{j-1} \right)  | {A}_{j-1,x}^c\right) \\
	= &\Pi_{1} + \Pi_2  =\Pi_{1} + P(W^{j-1}_{sk} \le 0 )(C_1 +C_2)\\
	\le&\frac{1}{2}(\frac{1}{2} - \delta_{j-1}) + \frac{1}{2}(\frac{1}{4} + \frac{1}{2}(\frac{1}{2}+\delta_{j-1})) \\
	=&\frac{1}{2}(1-\frac{\delta_{j-1}}{2}),
	\end{aligned}
	\end{equation}
	which leads to
	\begin{equation}
	\mathcal P_2 \le  \prod_{s=1}^{m_{j-1}} P\left( \left( \bar D^{j-1}_{s,x} | R_{j-1} \right)  | {A}_{j-1,x}^c\right) = 2^{-m_{j-1}}(1 -\frac{ \delta_{j-1}}{2})^{m_{j-1}}.
	\end{equation}
	
	Finally, we get
	\begin{equation}
	\begin{aligned}
	P({A}_{j,x} |{A}_{j-1,x}^c) &= 	(1-p_{j-1})\mathcal P_{1} + p_{j-1} \mathcal P_{2} \\
	&\le 2^{-m_{j-1}} \left((1-p_{j-1}) + p_{j-1}(1-\frac{\delta_{j-1}}{2})^{m_{j-1}}\right).
	\end{aligned}
	\end{equation}

In summary, we have the final estimate by (\ref{estimate})
 \begin{equation}
\begin{aligned}
P(A_n) &\le 1 - \prod_{\ell=1}^{n-1} \left(1 - 2^{-m_{\ell}} \left((1-p_{\ell}) + p_{\ell}(1-\frac{\delta_{\ell}}{2})^{m_{\ell}}\right) \right).
\end{aligned}
\end{equation}

\subsection{Proof of Theorem 4.2}\label{pr4.2}

{\bf Notations in the proof of Theorem 4.2}
We have the following notations
\begin{itemize}
  \item We denote	$\bar \theta^\ell = (\bar W^\ell, \bar b^\ell) \in \mathbb{R}^{m_\ell \times (m_{\ell-1} + 1)}$.
  \item We define the following set
  	\begin{equation}
	\begin{aligned}
	E_\ell &= \{  \bar \theta^\ell_{ij} >0 \text{ for all } 1\le i\le m_\ell, 1\le j \le m_{\ell-1}+1 \},\\
	R_\ell^{k} &= \{  \text{ the $\ell$-th layer is chosen to be re-initialized } k \text{ times}\}, \\
	C^{k,t}(x) &= \{ x \text{ is chosen to be re-sampled } t \text{ times in all } k \text{ times re-initializations} \}.
	\end{aligned}\nonumber
	\end{equation}
\end{itemize}

Since $\sigma(x) = \max\{0,x \} \ge 0$, we have
	\begin{equation}
		P\left( \bar {f}^n \text{ is born dead in } \Omega \right) \le 1- \prod_{\ell=1}^{n-1}P\left( E_\ell \right),
	\end{equation}
where \begin{equation}\label{ELL}
P(E_\ell) = \sum_{k=0}^N P(E_\ell | R_\ell^k ) P(R_\ell^k) \hbox{~and~}P(R_\ell^k) = \tbinom{N}{k} p_\ell^k (1-p_\ell)^{N-k}.
\end{equation}
Since \begin{equation}
\begin{aligned}
P(E_\ell | R_\ell^k ) &=  \prod_{ 1\le i\le m_\ell, 1\le j \le m_{\ell-1}+1} P(\bar \theta^\ell_{ij} > 0 | R_\ell^k) = \left(  1 - P(\bar \theta^\ell_{ij} \le  0 | R_\ell^k) \right)^{M_\ell},
\end{aligned}
\end{equation}
where $M_\ell = m_\ell \times (m_{\ell-1} + 1)$,
with $t$ times re-samplings in $k$ times re-initializations, for any  $1\le i\le m_\ell, 1\le j \le m_{\ell-1}+1$, we have
\begin{equation}\label{PER}
P(\bar \theta^\ell_{ij} \le  0 | R_\ell^k) = \sum_{t=0}^k P\left(\big(\bar \theta^\ell_{ij} \le  0 | C^{k,t}(\bar \theta^\ell_{ij}) \big)  | R_\ell^k\right) P(C^{k,t} (\bar \theta^\ell_{ij})  | R^\ell_k).
\end{equation}
For $P\left(\big(\bar \theta^\ell_{ij} \le  0 | C^{k,t}(\bar \theta^\ell_{ij}) \big)  | R_\ell^k\right)$, we know
that $\bar \theta^\ell_{ij} \le  0 | C^{k,t}(\bar \theta^\ell_{ij}) \big)  | R_\ell^k$ occurs if and only
if both the originally sampled $\theta^{\ell}_{ij}$ and all the $t$ times re-sampled $\bar \theta^{\ell}_{ij}$ are negative, namely,
\begin{equation}
\begin{aligned}
P\left(\big(\bar \theta^\ell_{ij} \le  0 | C^{k,t}(\bar \theta^\ell_{ij}) \big)  | R_\ell^k\right) = P (\theta^\ell_{ij} \le 0) P(\bar \theta^{\ell}_{ij} \le 0 \text{ after } t \text{ times re-sampling}) =\frac{1}{2} \left( \frac{1}{2}\right)^t.
\end{aligned}\nonumber
\end{equation}
Moreover, \begin{equation}
P(C^{k,t} (\bar \theta^\ell_{ij})  | R^\ell_k)= \tbinom{k}{t} \left(\frac{1}{2}\right)^{t}\left(\frac{1}{2}\right)^{k-t},
\end{equation}
(\ref{PER}) becomes
\begin{equation}
\begin{aligned}
P(\bar \theta^\ell_{ij} \le  0 | R_\ell^k) = \frac{1}{2} \sum_{t=0}^k   \tbinom{k}{t} \left(\frac{1}{4}\right)^{t}\left(\frac{1}{2}\right)^{k-t}= \frac{1}{2} \left( \frac{3}{4} \right)^k.
\end{aligned}
\end{equation}

Therefore, for $\forall~  0\le k \le N$, we have
\begin{equation}
\begin{aligned}
P(E_\ell | R_\ell^k ) &= \left(  1 - P(\bar \theta^\ell_{ij} \le  0 | R_\ell^k) \right)^{M_\ell} = \left(  1 - \frac{1}{2} \left( \frac{3}{4} \right)^k \right)^{M_\ell} \ge 1 - M_\ell \frac{1}{2} \left( \frac{3}{4} \right)^k.
\end{aligned}
\end{equation}
By (\ref{ELL}), we have  \begin{equation}
\begin{aligned}
P(E_\ell) \ge \sum_{k=0}^N \left(1 -  \frac{M_\ell}{2} \left( \frac{3}{4} \right)^k \right) \tbinom{N}{k} p_\ell^k (1-p_\ell)^{N-k} = 1 -  \frac{M_\ell}{2} \left( 1 -  \frac{p_\ell}{4}\right)^N.
\end{aligned}
\end{equation}
In summary, we conclude that, as $N \to \infty$,
\begin{equation}
\begin{aligned}
&P\left( \bar {f}^n \text{ is born dead in } \Omega \right) \le &1 - \prod_{\ell=1}^{n-1} \left( 1 -  \frac{M_\ell}{2} \left( 1 -  \frac{p_\ell}{4}\right)^N \right) \rightarrow 0.
\end{aligned}
\end{equation}

{ 
{\subsection{Experimental details}
\subsubsection{Fully Connected Neural Networks}
We compare the LPS initialization with He initialization on two fully connected neural networks to fit 1D and 2D functions. Fig \ref{fc_dying_prob}  shows the probability of born dying ReLU for different initialization strategies by using the hyperparameters shown in Table \ref{fcc} over 1000 initialization. The probability is computed by
\[P=\frac{\hbox{the number of initialization such that the variance of $y(x)\leq 10^{-10}$} }{1000}.\]	
By using the hyperparameters shown in Table \ref{fcc}, we train neural networks based on different initialization strategies to find ``non-collapse" cases. More specifically, we employ the Adam optimization algorithm \cite{kingma2014adam} with a learning rate of 0.001 with 4000
training steps. The training loss is based on the standard mean square error
(MSE) \[\mathcal{L}(\theta) =
\frac{1}{n}\sum_{i=1}^n\|(f(x_i;\theta)-y_i)\|_2^2.\]
For different function, $f_i$ ($i=1,\cdots,4$), we set different collapse thresholds to distinguish ``collapse" and  ``non-collapse" cases.
\begin{table}[!ht]
	\centering
	\begin{tabular}{|c|c|c|c|c|}
		\toprule
		Parameter & f1 & f2 & f3 & f4\\
		\midrule
		Number of sample points & 21 & 21 & 100 & 441\\
		Number of hidden layers & 10 & 10 & 10 & 20\\
		Size of hidden layers & 2 & 2 & 2 & 4\\
		Learning rate & $10^{-3}$ & $10^{-3}$ & $10^{-3}$ & $10^{-3}$\\
		Collapse threshold & 0.09 & 0.2 & 0.2 & 0.2\\\hline
		Optimizer &\multicolumn{4}{|c|}{ Adam} \\
		Dying threshold & \multicolumn{4}{|c|}{$10^{-10}$ }\\
		Training steps per run & \multicolumn{4}{|c|}{$4\times10^{3}$ }\\
		Number of runs & \multicolumn{4}{|c|}{$10^{3}$}\\
		\bottomrule
	\end{tabular}
\caption{Hyperparameters for the fully connected neural networks}\label{fcc}
\end{table}

\subsubsection{Convolutional neural networks}
We use the hyperparameters shown in Table \ref{LeNet} to train LeNet networks on the MNIST dataset \cite{MNIST}. We compute the mean and standard deviation of error rates and Good Local Minimum Percentage (GLMP) which refers the percentage of the validation accuracy greater than 99\% in 100 initialization. 
\begin{table}[h]
	\centering
	\begin{tabular}{|c|c|}
		\toprule
		Parameter & Value\\
		\midrule
		Number of epochs & 100\\
		Batch size & 64\\
		Initial learning rate & 0.05\\
		Learning rate schedule & Decrease by half every 30 epochs\\
		Weight decay & $5\times 10^{-4}$\\
		Optimizer & SGD with momentum = 0.9\\
		Bias initialization & False\\
		Number of runs & 100\\
		\bottomrule
	\end{tabular}
\caption{Hyperparameters for MNIST experiments}\label{LeNet}
\end{table}

All the hyperparameters shown in Table \ref{VGG} are used to train the VGG networks on the CIFAR-10 dataset \cite{CIFAR10}. 

\begin{table}[h]
	\centering
	\begin{tabular}{|c|c|}
		\toprule
		Parameter & Value\\
		\midrule
		Data Augmentation & \{RandomHorizontalFlip $\&$ RandomCrop\}\\
		Batch normalization & True \\
		Number of epochs & 250\\
		Batch size & 128\\
		Initial learning rate & 0.2\\
		Learning rate schedule & Decrease by half every 30 epochs\\
		Weight decay & $5\times 10^{-4}$\\
		Optimizer & SGD with momentum = 0.9\\
		Bias initialization & Both\\
		Number of runs & 10\\
		\bottomrule
	\end{tabular}
\caption{Hyperparameters for VGG networks on CIFAR-10.}\label{VGG}
\end{table}

All the deep residual network architecture considered in our experiment are reported in \cite{he2016deep}. We use the hyperparameters shown in Table \ref{Resnet}  train the ResNets on CIFAR-10, CIFAR-100 \cite{CIFAR10}, and ImageNet datasets \cite{deng2009imagenet}. 

\begin{table}[h]
	\centering
\scriptsize
	\begin{tabular}{|c|c|c|}
		\toprule
		Parameter & CIFAR-10 $\&$ CIFAR-100 & ImageNet\\
		\midrule
		Data Augmentation & \{RandomHorizontalFlip $\&$ RandomCrop\} & \{RandomHorizontalFlip $\&$ RandomResizedCrop\}\\
		Number of epochs & 250 & 90\\
		Batch size & 128 & \{ResNet-50: 128, ResNet-34: 256\}\\
		Initial learning rate & 0.2 & 0.1\\
		Learning rate schedule & Decrease by half every 30 epochs & Decrease by $1/10$ every 30 epochs\\
		Bias initialization & Both & False\\
		Number of runs & 10 & 1\\\hline
		Batch normalization & \multicolumn{2}{|c|}{True}\\
		Weight decay & \multicolumn{2}{|c|}{$5\times 10^{-4}$}\\
		Optimizer & \multicolumn{2}{|c|}{SGD with momentum = 0.9} \\
		\bottomrule
	\end{tabular}
\caption{Hyperparameters for the residual networks on CIFAR-10/CIFAR-100 and ImageNet datasets.}\label{Resnet}
\end{table}
}}

\bibliographystyle{RS}
\bibliography{LPS_init}

\end{document}